\def\eqref#1{equation~\ref{#1}}
\def\1{\bm{1}}
\DeclareMathAlphabet{\mathsfit}{\encodingdefault}{\sfdefault}{m}{sl}
\SetMathAlphabet{\mathsfit}{bold}{\encodingdefault}{\sfdefault}{bx}{n}
\definecolor{pos}{rgb}{0.698, 0.133, 0.133}
\definecolor{neg}{rgb}{0.275, 0.510, 0.706}
\newtheorem{theorem}{Theorem}
\newtheorem{define}{Definition}
\newtheorem{lemma}{Lemma}
\title{PolaFormer: Polarity-aware Linear \\Attention for Vision Transformers}
\author{
$\text{Weikang Meng}^{1,2}$, $\text{Yadan Luo}^{3}$, $\text{Xin Li}^{2}$, $\text{Dongmei Jiang}^{2}$, $\textbf{Zheng Zhang}^{1}$\thanks{Correspondence to Zheng Zhang $<$\text{darrenzz219@gmail.com}$>$}\\ 
$^{1}$ \text{Harbin Institute of Technology, Shenzhen, China}\\
$^{2}$ \text{Pengcheng Laboratory, China}\\
$^{3}$ \text{UQMM Lab, University of Queensland, Australia}\\
\texttt{zacharymengwk@gmail.com}\\
\texttt{darrenzz219@gmail.com}
}
\begin{document}

\maketitle

\begin{abstract}
Linear attention has emerged as a promising alternative to softmax-based attention, leveraging kernelized feature maps to reduce complexity from quadratic to linear in sequence length. However, the non-negative constraint on feature maps and the relaxed exponential function used in approximation lead to significant information loss compared to the original query-key dot products, resulting in less discriminative attention maps with higher entropy. To address the missing interactions driven by negative values in query-key pairs, we propose a polarity-aware linear attention mechanism that explicitly models both same-signed and opposite-signed query-key interactions, ensuring comprehensive coverage of relational information. Furthermore, to restore the spiky properties of attention maps, we provide a theoretical analysis proving the existence of a class of element-wise functions (with positive first and second derivatives) that can reduce entropy in the attention distribution. For simplicity, and recognizing the distinct contributions of each dimension, we employ a learnable power function for rescaling, allowing strong and weak attention signals to be effectively separated. Extensive experiments demonstrate that the proposed PolaFormer improves performance on various vision tasks, enhancing both expressiveness and efficiency by up to 4.6\%. Code is available at \texttt{https://github.com/ZacharyMeng/PolaFormer}.
\end{abstract}

\begin{figure}[h]\vspace{-2ex}
    \centering
    \includegraphics[width=1\linewidth]{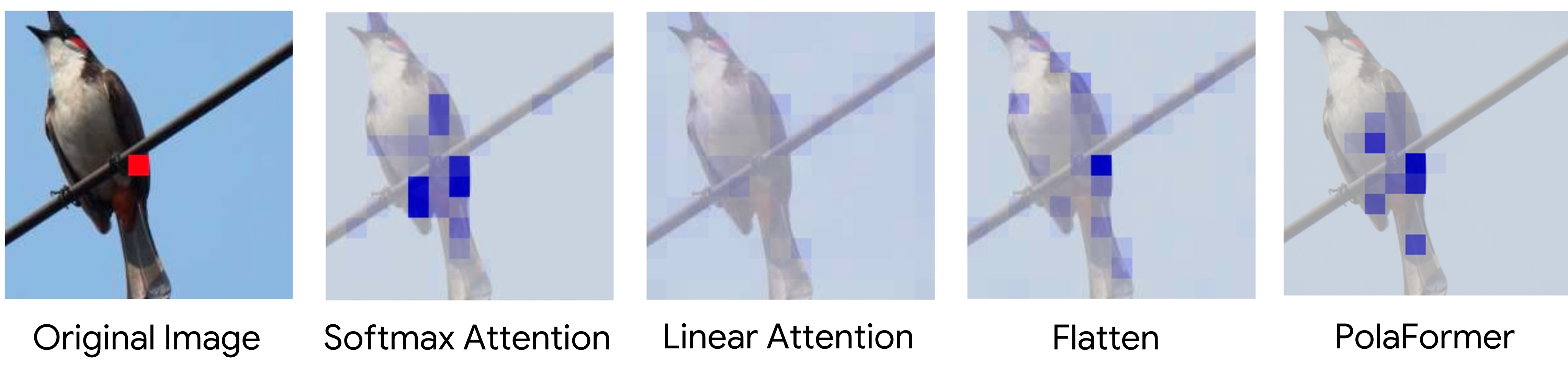}\vspace{-2ex}
    \caption{\textbf{Attention weight visualization.} Unlike prior linear attention approaches \textit{(\citep{katharopoulos2020transformers} the 3rd and \citep{han2023flatten} 4th plots)} that generate uniform responses, the proposed PolaFormer captures a more accurate query-key interaction with lower entropy, closely resembling softmax while maintaining linear complexity.}
    \label{fig:teaser-atten}
\end{figure}

\vspace{-3ex}
\section{Introduction}\vspace{-1ex}
Transformers have demonstrated remarkable success across a broad range of vision tasks \citep{yuan2021hrformer,cai2022efficientvit}. The core component, dot-product attention with softmax normalization, enables transformers to capture long-range dependencies effectively. However, this comes at the cost of quadratic complexity $\mathcal{O} (N^2)$ in relation to the sequence length $N$, resulting in considerable computational overhead particularly when processing long-sequence videos or high-resolution images. This limits their efficiency in resource-constrained environments, making practical deployment difficult in such scenarios.

To mitigate this challenge, various methods have been proposed to \textit{accelerate} attention computation. Techniques such as localized or sparse attention reduce the number of tokens or key-value pairs by restricting attention to smaller windows or sparser patterns, thereby lowering the overall computational costs. While effective, these methods often sacrifice important contextual information, leading to unstable convergence behaviors and performance degradation. As a more principled solution, linear attention \citep{katharopoulos2020transformers} replaces the $\operatorname{Softmax}$ operation in the query-key dot product with kernalized feature maps, effectively reducing time and space complexity from $\mathcal{O}(N^2d)$ to $\mathcal{O}(Nd^2)$, where $d$ denotes the feature map dimension.  Recent advances in linear attention have centered on designing two key components, \textit{i.e.},  (1) \textit{non-negative feature maps} such as $\operatorname{ELU}+1$ \citep{katharopoulos2020transformers} and $\operatorname{ReLU}$ \citep{DBLP:conf/iclr/QinSDLWLYKZ22} and (2) \textit{kernel functions} including Gaussian kernels \citep{chen2021skyformer},  Laplace kernels \citep{wang2020linformer} and polynomial kernels \citep{DBLP:conf/icml/KachamMZ24}, to preserve the core properties of the original $\operatorname{Softmax}$ function while improving computational efficiency.

Despite the efficiency gains, linear attention still falls short in expressive capacity compared to softmax-based attention: As illustrated in Figure \ref{fig:teaser-atten}, it often yields more \textit{uniform} attention weights across query-key pairs, thus resulting in reduced specificity. For instance, when querying a particular region like \textit{bird wing}, linear attention tends to activate key tokens from unrelated areas (\textit{e.g.}, \textit{poles}) equally, introducing noise that disrupts downstream vision tasks. Our analysis identifies two primary causes for this shortfall, both stemming from \textbf{information loss} during the $\operatorname{Softmax}$ approximation:
\begin{figure}[t]
    \centering
    \includegraphics[width=1\linewidth]{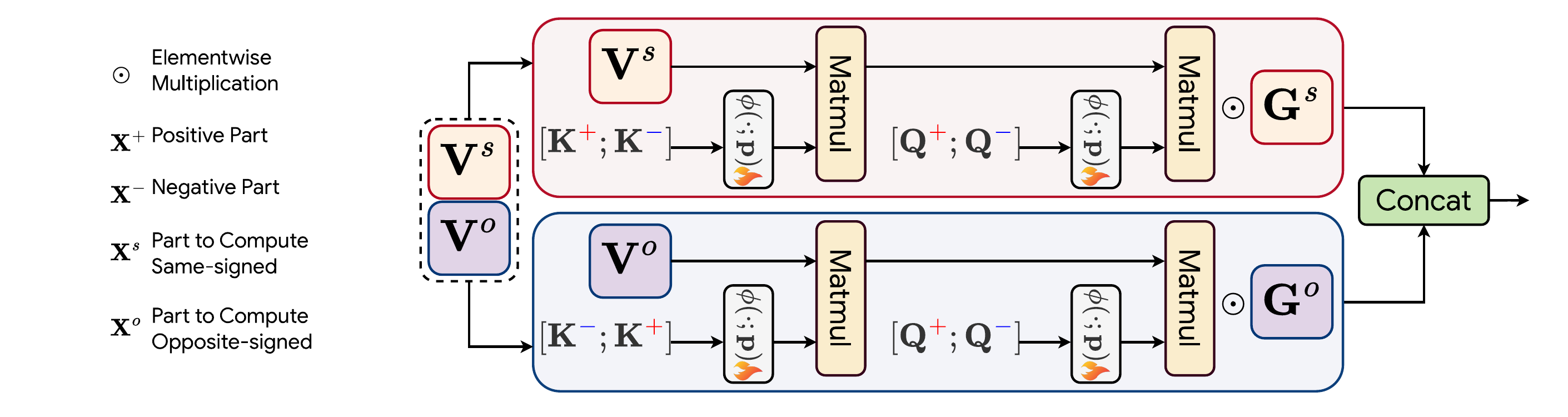}\vspace{-2ex}
    \caption{\textbf{The overall framework of PolaFormer}. Our framework explicitly separates query-key pairs based on their polarity into two distinct streams, with scaled outputs controlled by the learnable sign-aware matrices $\mathbf{G}^s$ and $\mathbf{G}^o$ for same-signed and opposite-signed components, respectively. A channel-wise power function with the learnable exponent $\mathbf{p}$ is employed to learn the rescaling process, capturing the sharpness characteristic of softmax.}\vspace{-4ex}
    \label{fig:overview}
\end{figure}

\noindent (1) \textbf{Loss of Negative Values.}  Linear attention models that rely on non-negative feature maps, such as $\operatorname{ReLU}$, fail to maintain consistency with the original query-key dot product. These feature maps retain only \textit{positive-positive} interactions, while crucial \textit{negative-negative} and \textit{positive-negative} interactions are completely dropped. This selective representation limits the model’s ability to capture a comprehensive range of relationships, leading to diminished expressiveness and reduced discriminative power in the resulting attention maps.

\noindent (2) \textbf{Loss of Attention Spikeness.} Without the exponential scaling of softmax, linear attention leads to more uniform weight distributions and lower entropy. This uniformity weakens the model's ability to distinguish between strong and weak query-key pairs, impairing its focus on important features and reducing performance in tasks requiring fine detail.

In this work, we propose a polarity-aware linear attention (\textbf{PolaFormer}) mechanism, designed to address the limitations of prior linear attention models by incorporating the previously omitted negative interactions. Unlike traditional approaches that only preserve positive-positive query-key interactions, PolaFormer explicitly separates query-key pairs based on their polarity—handling a full spectrum of same-signed (\textit{positive-positive}, \textit{negative-negative}) and opposite-signed (\textit{positive-negative}, \textit{negative-positive}) interactions as shown in Figure \ref{fig:overview}. These interactions are processed in two streams, allowing for a more accurate reconstruction of the original softmax attention weights. To avoid unnecessary complexity, we split the value vector along the channel dimension, handling both types of interactions without introducing additional learnable parameters. The outputs are then concatenated and scaled with a learnable sign-aware matrix, ensuring a faithful reconstruction of query-key relationships.

To mitigate the issue of uniform attention weights commonly observed in linearized attention, we provide a theoretical foundation showing that an element-wise function can rescale the query-key responses to reduce entropy, provided the function has positive first and second derivatives. This insight helps clarify why previous feature maps such as $\operatorname{ReLU}$ and $\operatorname{ELU}$ tend to elevate entropy, leading to overly smoothed attention distributions. For simplicity, we employ a channel-wise learnable power function for rescaling, which retains the sharpness of the exponential function inherent in $\operatorname{Softmax}$. This enables the model to capture spiky attention peaks, improving its ability to distinguish between strong and weak responses. Together, these enhancements offer a more robust solution to bridging the gap between linearized and softmax-based attention.
We conduct extensive experiments on various vision tasks and the Long Range Arena benchmark \citep{tay2020long}, demonstrating that our model enhances performance by up to 4.6\% while preserving a superior balance between expressive capability and efficiency.

\section{Related Work}\vspace{-1ex}
\textbf{Efficient Vision Transformers.} By cutting images into smaller patches and processing them as a sequence, Vision Transformers (ViT) \citep{dosovitskiy2020image} successfully transfer transformer models \citep{vaswani2017attention} from language tasks to vision tasks, and have achieved remarkable results. However, the quadratic complexity of the self-attention mechanism in ViT makes it expensive to train. Existing works have made various improvements to ViT for computational efficiency. Swin-Transformer \citep{liu2021swin} introduces a shifted windows scheme to limit self-attention computation to local windows. Pyramid Vision Transformer (PVT) \citep{wang2021pyramid} uses a progressive shrinking pyramid to reduce the computations of feature maps. Deit \citep{touvron2021training} enables models to achieve competitive performance without pretraining on large datasets by utilizing designed tokenization mechanisms and training strategies. However, these improvements do not solve the bottleneck of the self-attention mechanism, and quadratic complexity, thus the training cost is still unaffordable as the model scale increases. To address this issue, VMamba \citep{liu2024vmambavisualstatespace}  extracts the information of the picture based on the spatial state model (SSM) encoding through serializing and scanning the picture, at the same time it inherits the linear complexity of SSM. 
VHeat \citep{wang2024vheatbuildingvisionmodels} conceptualize image patches as heat sources and simulate the conduction process, and utilize DCT and IDCT operations to reduce the complexity to $\mathcal{O}(N^{1.5})$. These methods have just been proposed and are not yet as widely validated and deployed at scale as Transformers,their model performance is also not significantly higher than the other models.

\noindent\textbf{Linear Attention.} 
Sub-quadratic transformers focus on alleviating the inefficiency of the standard self-attention mechanism due to the softmax function and its quadratic complexity. A preferable solution is to use kernel-based similarities to reduce the complexity by approximating the softmax operator.
The initial linear attention \citep{katharopoulos2020transformers} proposes to substitute the $\operatorname{Softmax}$ function with a linear dot-product of kernel feature maps, which facilitates reducing the complexity from $\mathcal{O}(N^2)$ to $\mathcal{O}(N)$. Following this \textit{$\operatorname{Softmax}$-free} scheme, some variants of linear attention have been proposed by employing different kernel functions, such as $\operatorname{ReLU}(\cdot)$ \citep{shen2021efficient} and $\operatorname{1+ELU}(\cdot)$ \citep{katharopoulos2020transformers}. Moreover, to fulfill the non-negative and distribution properties of attention matrix, Cosformer \citep{DBLP:conf/iclr/QinSDLWLYKZ22} combines the ReLU function and cos-based re-weighting mechanism to enhance the 
self-attention weighs with locality inductive biases. FLatten Transformer \citep{han2023flatten} extends $\operatorname{ReLU}(\cdot)$ with power operation to maintain both properties of attention weights, \textit{i.e.,} non-negative and low-entropy. It is a practical way to use power function to calculate the inner product to approximate exp, which is similar to the use of power function to approximate max-pooling proposed in R-MAC \citep{DBLP:journals/corr/ToliasSJ15}. Recently, Agent Attention \citep{han2024agentattentionintegrationsoftmax}, a claimed generalized linear attention, introduces $n$ agent tokens to aggregate features based on a combination of Softmax and linear attention with $\mathcal{O}(Nnd)$ complexity. As both $N$ and $n$ increase simultaneously with the model size, the complexity of the generalized linear attention is not absolutely linear with respect to $N$.
Notably, the balanced performance still relies on the softmax operator and additional agent tokens, which violates the original premise of linear attention, \textit{i.e.,} softmax-free and linear complexity. Current kernel functions either suffer from performance degradation or introduce excessive computational overhead. We observed significant information loss in comparison to original query-key dot products due to the non-negative constraint on attention weights and the intricate kernel designs aimed at achieving low entropy. This issue will be further addressed in the following sections of this work.

\vspace{-2ex}
\section{Preliminary}\vspace{-1ex}
In this section, we first highlight the inefficiency of the standard self-attention mechanism, followed by a discussion of the variants of existing linear attention methods.
\vspace{-2ex}
\subsection{Low Efficiency of Self-Attention Mechanism}\vspace{-1ex}
Consider a sequence $\mathbf{x}\in\mathbb{R}^{N\times D}$ of token length $N$ and dimension $D$. $\mathbf{x}$ is devided into $h$ heads, the dimension of each head is $d$. In each head, tokens at various positions are collectively attended to capture long-range dependencies. The output $\mathbf{O} = \{\mathbf{o}_{t}\}_{t=1}^{N}\in\mathbb{R}^{N\times d}$ can be formulated as:
    \begin{equation} \label{eq:selfattention}
        \mathbf{O} = \operatorname{Softmax}(\frac{\mathbf{Q}\mathbf{K}^{\top}}{\sqrt{d} } )\mathbf{V}, ~ \mathbf{o}_t= 
        \frac{\sum_{i=1}^{N} \operatorname{exp} (\mathbf{q}_t \mathbf{k}_i^{\top} /\sqrt{d})}
        {\sum_{j=1}^{N} \operatorname{exp} (\mathbf{q}_t\mathbf{k}_j^{\top} /\sqrt{d})}\mathbf{v}_i.
    \end{equation}
 Here, the query, key, and value vectors of dimension $d$ are obtained by linearly projecting the inputs with three learnable matrices $\mathbf{Q}=\left \{ \mathbf{q}_t \right \}_{t=1}^{N}$, $\mathbf{K} = \left \{ \mathbf{k}_t \right \}_{t=1}^{N}$, $\mathbf{V} = \left \{ \mathbf{v}_t \right \}_{t=1}^{N}\in\mathbb{R}^{d}$. For each head, the complexity of self-attention is $\mathcal{O} (N^2d)$, making the mechanism inefficient for long sequences. 
\vspace{-2ex}
\subsection{Kernel-based Linear Attention}\vspace{-1ex}
To mitigate the efficiency bottlenecks of standard self-attention, kernel-based linear attention mechanisms \citep{katharopoulos2020transformers} have been proposed, which decompose the similarity function into dot products of feature maps.  Following the notations in \citep{DBLP:conf/iclr/ChoromanskiLDSG21,chen2021skyformer}, we define $\mathbf{SM}(\mathbf{q}, \mathbf{k}) = \operatorname{exp}(\mathbf{q}_i\mathbf{k}_j^{\top})$ as the softmax kernel function. Mathematically, linear attention aims to use $\phi(\mathbf{q}_i)\phi(\mathbf{k}_j)^{\top}$ to approximate $\mathbf{SM}(\cdot, \cdot)$, where the feature map $\phi(\cdot ):\mathbb{R}^ d\mapsto \mathbb{R}^ {d'}$ is applied row-wise to the query and key matrices. As a result, the $t$-th row of attention output $\mathbf{o}_t$ can be rewritten as,
    \begin{equation}
        \mathbf{o}_t= 
        \frac{\sum_{i=1}^{N} \phi(\mathbf{q}_t)\phi(\mathbf{k}_i)^{\top}\mathbf{v}_i}
        {\sum_{j=1}^{N} \phi(\mathbf{q}_t)\phi(\mathbf{k}_j)^{\top}} 
        =
        \frac{\phi(\mathbf{q}_t)\sum_{i=1}^{N}\phi(\mathbf{k}_i)^{\top}\mathbf{v}_i}
        {\phi(\mathbf{q}_t)\sum_{j=1}^{N} \phi(\mathbf{k}_j)^{\top}}.
    \end{equation}
\vspace{-0.5ex}
By leveraging the associative property of matrix multiplication, the complexity per head is reduced to $\mathcal{O}(Nd'^2)$, which scales linearly with the sequence length.


\noindent\textbf{Choices of Feature Map} $\phi(\cdot)$. The primary distinction between various linear attention methods lies in the choice of feature maps $\phi(\cdot)$. Considering $\mathbf{SM}(\cdot, \cdot)$ is a PSD kernel function and the chosen feature map $\phi$ must satisfy two properties:
\begin{enumerate}\vspace{-1ex}
    \item  \textbf{Non-negativity.} To preserve the non-negative values in the approximation of $\mathbf{SM}$, previous methods utilize activation functions like $\phi(\mathbf{x}) = \operatorname{1+ELU}(\mathbf{x})$ \citep{katharopoulos2020transformers} or $\phi(\mathbf{x}) = \operatorname{ReLU}(\mathbf{x})$ \citep{DBLP:conf/iclr/QinSDLWLYKZ22,han2023flatten}. Other approaches connect $\mathbf{SM}$ with Gaussian kernel that uses $\phi(x) = \operatorname{exp}(\frac{\|\mathbf{x}^2\|}{2})$, incorporating trigonometric or random positive features.
    \item \textbf{Low Entropy.} It has been observed the attention-weights distribution in standard Transformers tends to be more ``spiky'' in linear ones, exhibiting lower entropy \citep{DBLP:conf/iclr/ZhangBKR24}. To rescale the query-key dot products back to the original magnitudes, techniques such as Taylor expansion \citep{keles2023computational} or higher norms on the numerical value of query and key \citep{han2023flatten} have been employed.
\end{enumerate}\vspace{-1ex}
 However, using non-negative feature maps inherently results in the loss of information from the original \textit{negative} values, which may carry important information in the original dot product calculation. This leads to discontinuities in the linear attention map compared to the standard attention. Furthermore, existing rescaling strategies \citep{han2023flatten} manually select a fixed norm across all dimensions, \textit{i.e.}, $\phi(\mathbf{x})=f_p(\operatorname{ReLU}(\mathbf{x}))$, where $f_p(\mathbf{x}) = \frac{\|\mathbf{x}\|}{\|\mathbf{x}^p\|}\mathbf{x}^{p}$. This fixed norm $p$ may not be optimal across different datasets.


\section{Proposed Approach}\vspace{-1ex}
In this section, we present a novel polarity-aware attention mechanism that accurately captures query-key interactions without incurring additional computational overhead. Our method incorporates a learnable dimension-wise power function that dynamically rescales the magnitudes of same- and opposite-signed components, effectively reducing entropy in the linear attention.
\vspace{-1ex}
\subsection{Polarity-aware Attention}\vspace{-1ex}
The key idea behind polarity-aware attention is to address the limitations of existing linear attention mechanisms, which often discard valuable information from negative components. We start by decomposing the query vector $\mathbf{q}=\{q_i\}_{i\in[d]}\in\mathbb{R}^d$ and key vector $\mathbf{k}=\{k_i\}_{i\in[d]}\in\mathbb{R}^d$ element-wise into their positive and negative components: 
\begin{equation} 
\mathbf{q} = \mathbf{q}^{\textcolor{pos}{+}} - \mathbf{q}^{\textcolor{neg}{-}}, \quad \mathbf{k} = \mathbf{k}^{\textcolor{pos}{+}} - \mathbf{k}^{\textcolor{neg}{-}}, 
\end{equation} where $\mathbf{q}^{\textcolor{pos}{+}}_i = \max(q_i, 0)$ and $\mathbf{q}^{\textcolor{neg}{-}}_i = \max(-q_i, 0)$, representing the positive and negative parts of $\mathbf{q}$, respectively, and similarly for $\mathbf{k}$. Substituting these decompositions into the inner product of $\mathbf{q}$ and $\mathbf{k}$ gives:
\begin{equation}
    \label{eq:pos-neg}
    \left<\mathbf{q},\mathbf{k}\right> =
    \left<\mathbf{q}^{\textcolor{pos}{+}},\mathbf{k}^{\textcolor{pos}{+}}\right> + 
    \underbrace{\left<\mathbf{q}^{\textcolor{neg}{-}},\mathbf{k}^{\textcolor{neg}{-}}\right> -
    \left<\mathbf{q}^{\textcolor{pos}{+}},\mathbf{k}^{\textcolor{neg}{-}}\right> -
    \left<\mathbf{q}^{\textcolor{neg}{-}},\mathbf{k}^{\textcolor{pos}{+}}\right>}_{\texttt{neglected negatives}}
\end{equation}
The first two terms capture the similarity between \textit{same-signed} components, while the latter two terms represent interactions between \textit{opposite-signed} components. Previous linear attention approaches, such as ReLU-based feature maps, eliminate negative components by mapping them to zero, resulting in significant information loss when approximating query-key dot products. 

To address this, our polarity-aware attention mechanism separates query-key pairs based on their polarity, computing their interactions \textit{independently}. The attention weights are calculated as follows:
\begin{equation}\label{eq:polarity}
\begin{aligned}
    \mathbf{SM}(\mathbf{q}, \mathbf{k}^\top) &= \operatorname{exp}(\mathbf{q}\mathbf{k}^\top)\\
    &\approx \left(\phi(\mathbf{q}^{\textcolor{pos}{+}})\phi(\mathbf{k}^{\textcolor{pos}{+}})^\top + \phi(\mathbf{q}^{\textcolor{neg}{-}})\phi(\mathbf{k}^{\textcolor{neg}{-}})^\top\right ) - 
    \left(\phi(\mathbf{q}^{\textcolor{pos}{+}})\phi(\mathbf{k}^{\textcolor{neg}{-}})^\top + \phi(\mathbf{q}^{\textcolor{neg}{-}})\phi(\mathbf{k}^{\textcolor{pos}{+}})^\top\right).
\end{aligned}
\end{equation}
This formulation recovers the information embedded in both positive and negative components.

\noindent \textbf{Learnable Polarity-aware Mixing.} While this formulation captures key information carried by both same-signed and opposite-signed components, directly subtracting opposite-signed similarities can violate non-negativity constraints, leading to unstable training and suboptimal performance. To avoid the pitfalls of subtractive operation, we instead resort to a learnable mixing mechanism that weighs the contributions of same-signed and opposite-signed query-key similarities.

More concretely, we split each value vector $\mathbf{v}\in\mathbb{R}^{N\times d}$ along the $d$ dimension into two halves to separately handle same- and opposite-signed response, \textit{i.e.}, $\mathbf{v}= [\mathbf{v}^{\operatorname{s}}; \mathbf{v}^{\operatorname{o}}]$, where both $\mathbf{v}^{\operatorname{s}}$ and $\mathbf{v}^{\operatorname{o}}$ have a dimensionality of $d/2$. The output attention is then computed as: 
\begin{equation}
    \mathbf{o}_t = [\frac{\phi([\mathbf{q}^{\textcolor{pos}{+}}_t; \mathbf{q}^{\textcolor{neg}{-}}_t])\sum_{i=1}^{N}\phi([\mathbf{k}^{\textcolor{pos}{+}}_i; \mathbf{k}^{\textcolor{neg}{-}}_i; ])^{\top}\mathbf{v}^{\operatorname{s}}_i}
        {\phi([\mathbf{q}^{\textcolor{pos}{+}}_t; \mathbf{q}^{\textcolor{neg}{-}}_t])\sum_{j=1}^{N} \phi([\mathbf{k}^{\textcolor{pos}{+}}_j; \mathbf{k}^{\textcolor{neg}{-}}_j])^{\top}} \odot \mathbf{G}^{\operatorname{s}};
        \frac{\phi([\mathbf{q}^{\textcolor{pos}{+}}_t; \mathbf{q}^{\textcolor{neg}{-}}_t])\sum_{i=1}^{N}\phi([\mathbf{k}^{\textcolor{neg}{-}}_i; \mathbf{k}^{\textcolor{pos}{+}}_i; ])^{\top}\mathbf{v}^{\operatorname{o}}_i}
        {\phi([\mathbf{q}^{\textcolor{pos}{+}}_t; \mathbf{q}^{\textcolor{neg}{-}}_t])\sum_{j=1}^{N} \phi([\mathbf{k}^{\textcolor{neg}{-}}_j; \mathbf{k}^{\textcolor{pos}{+}}_j])^{\top}} \odot \mathbf{G}^{\operatorname{o}}
        ],
\end{equation}
\begin{wrapfigure}{l}{0.5\linewidth}
    \centering\vspace{-3ex}
    \includegraphics[width=1\linewidth]{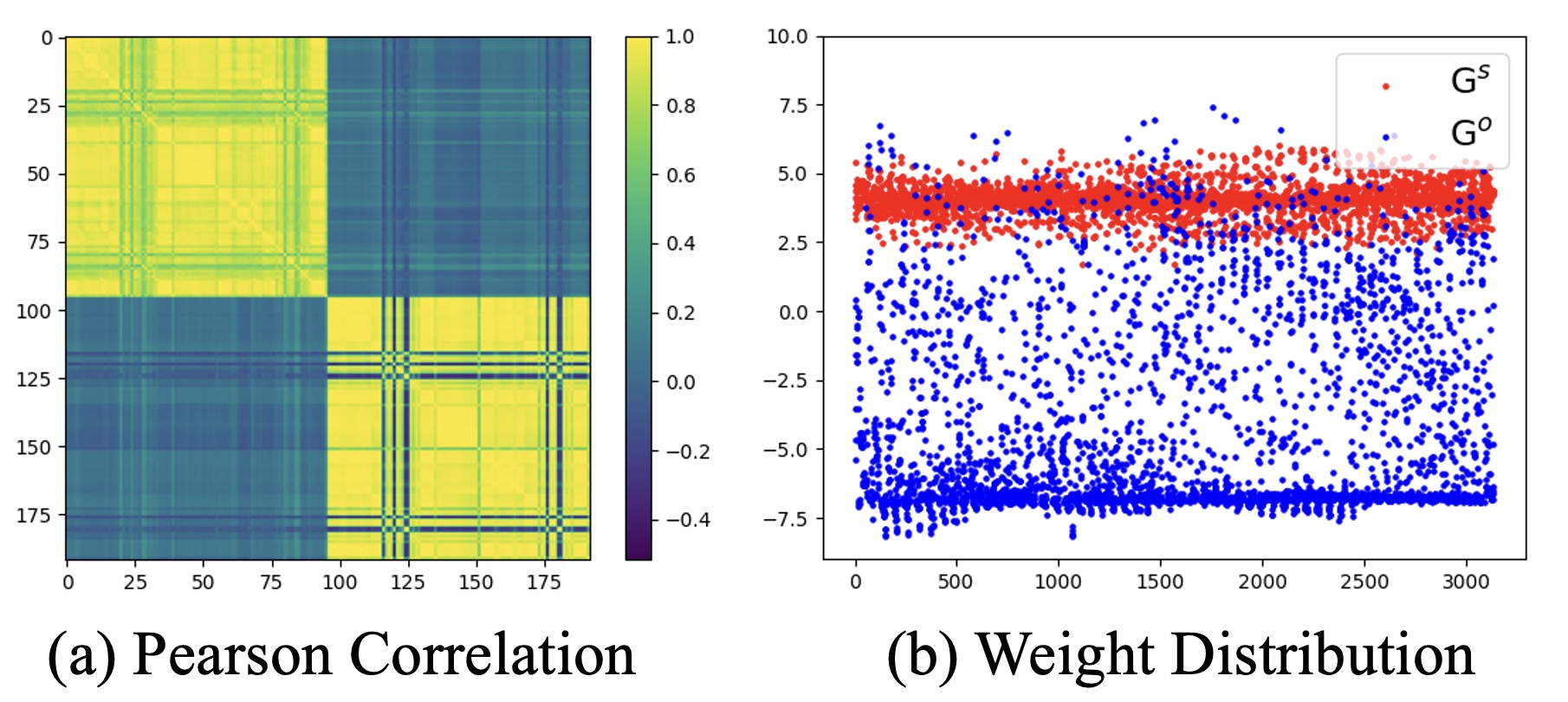}\vspace{-2ex}
    \caption{Visualizations of weights in $\mathbf{G}^s$ and $\mathbf{G}^o$.}\vspace{-3ex}
    \label{fig:pearsonpca}
\end{wrapfigure} where $[\cdot, \cdot]$ denotes concatenation operation. $\mathbf{G}^{\operatorname{s}}\in\mathbb{R}^{N\times \frac{d}{2}}$ and $\mathbf{G}^{\operatorname{o}}\in\mathbb{R}^{N\times \frac{d}{2}}$ are two learnable polarity-aware coefficients matrices applied with element-wise multiplication, which are expected to learn the complementary relationship between same-signed and opposite-signed values. As shown in Figure \ref{fig:pearsonpca}, there is a clear negative correlation and value discrepancy between the weights learned in $\mathbf{G}^{\operatorname{s}}$ and $\mathbf{G}^{\operatorname{o}}$, which evidences our learnable mixing strategy compensates for the relaxed subtraction operation in Equation (\ref{eq:polarity}).

\noindent \textbf{Low-Rank $\mathbf{SM}$.} Previous theoretical work \citep{wang2020linformer} has shown that $\mathbf{SM}$ is inherently low-rank, particularly in higher layers where the spectrum distribution becomes more skewed. This property can lead to degenerate solutions when learning value vectors, especially when compact representations are required to accommodate polarity-aware information in our case. We explore various techniques such as depthwise and deformable convolutions to increase the rank, which can refer to the ablation study in Section \ref{section:dcn}.


\vspace{-1ex}
\subsection{Reducing Entropy in Linear Attention via Learnable Power Functions}\vspace{-1ex}
Softmax-free linear attention mechanisms often exhibit higher entropy compared to softmax-based attention, leading to less sharp value vector attention, which is detrimental to tasks requiring precise attention. To recover the low entropy characteristics observed in softmax-based attention, we reinterpret each row in $\mathbf{SM}(\mathbf{q}, \mathbf{k}^\top)$ as a generalized unnormalized positive sequence $\mathbf{x}=(x_1,...,x_N)$ and analyze its entropy using our proposed positive sequence entropy (PSE) measure, defined as:

\begin{define}[Positive Sequence Entropy (PSE)]
    Let a sequence $\mathbf{x}=(x_1,...,x_N)$, in which $x_i\ge0$, $i=1,\dots,N$, and $s=\sum_{i=1}^{N}x_i >0$. Then the entropy of this positive sequence is defined by:
    \begin{equation}
        \begin{aligned}
        \operatorname{PSE}(\mathbf{x}) =-\sum_{i=1}^{N}\frac{x_i}{s}\log(\frac{x_i}{s})\text{,\space}s =\sum_{i=1}^{N}x_i.
        \end{aligned}
    \end{equation}
\end{define}
\vspace{-2ex}
With $\operatorname{PSE}(\cdot)$ defined, we now seek a function $g(\cdot)$ that can be applied element-wise to $\phi(\mathbf{q}^i)$ and $\phi(\mathbf{K})=[\phi(\mathbf{k}^1), \ldots, \phi(\mathbf{k}^N)]$ such that the $\operatorname{PSE}$ of the $i$-th row of the linear attention map is reduced. The following theorem formalizes the \textit{conditions} under which this reduction in $\operatorname{PSE}$ can be achieved.

\begin{theorem}
    \label{theorem}
    Let $\mathbf{x},\mathbf{y}^n \in \mathbb{R}^d$ for $n=1,\dots N$, and let $g:[0,+\infty )\mapsto [0,+\infty )$ be a differentiable function satisfying the condition $g'(x)>0$ and $g''(x)>0$ for all $x>0$. Then, there exists such a function $g$ such that the $\operatorname{PSE}$ of the transformed sequence is strictly less than that of the original sequence. Specifically, we have:
    \begin{equation}
        \operatorname{PSE}(\langle g(\mathbf{x}),g(\mathbf{y^1})\rangle,\dots,\langle g(\mathbf{x}),g(\mathbf{y}^N)\rangle)
        <
        \operatorname{PSE}(\langle\mathbf{x},\mathbf{y^1}\rangle,\dots,\langle\mathbf{x},\mathbf{y}^N\rangle).
    \end{equation}
    Proof and supporting lemmas are provided in Section \ref{sec:proof}.
\end{theorem}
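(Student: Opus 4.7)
Since the claim is an existence statement, I would prove it by exhibiting an explicit admissible family. The most natural candidate is the power family $g_p(x)=x^p$ with $p>1$, which immediately satisfies $g_p'(x)=p x^{p-1}>0$ and $g_p''(x)=p(p-1) x^{p-2}>0$ on $(0,\infty)$. The first move is to collapse the inner products by setting $z^n_i:=x_i y^n_i\ge 0$, so that
\begin{equation*}
    a_n\;:=\;\langle\mathbf{x},\mathbf{y}^n\rangle\;=\;\sum_i z^n_i,\qquad b_n(p)\;:=\;\langle g_p(\mathbf{x}),g_p(\mathbf{y}^n)\rangle\;=\;\sum_i (z^n_i)^p,
\end{equation*}
which reduces the theorem to exhibiting some $p>1$ with $F(p):=\operatorname{PSE}(b_1(p),\ldots,b_N(p))<F(1)=\operatorname{PSE}(a_1,\ldots,a_N)$.

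\textbf{Gibbs/temperature identity plus asymptotics.} Writing $\ell_n(p):=\log b_n(p)$ and $P_n(p):=e^{\ell_n(p)}/\sum_m e^{\ell_m(p)}$, I would differentiate $F(p)=-\sum_n P_n\log P_n$. Using $\sum_n P_n'(p)=0$ together with the Gibbs identity $(\log\sum_m e^{\ell_m})'=\sum_n P_n\,\ell_n'$ telescopes the derivative into the compact expression
\begin{equation*}
    F'(p)\;=\;-\,\Cov_{P(p)}\!\bigl(\ell_n(p),\,\ell_n'(p)\bigr),
\end{equation*}
where $\ell_n'(p)=\sum_i (z^n_i)^p\log z^n_i\big/\sum_i (z^n_i)^p$. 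Independently, as $p\to\infty$, $\ell_n(p)/p\to\log\max_i z^n_i$, so $P(p)$ concentrates on the index set $M:=\{n:\max_i z^n_i=\max_{n',i} z^{n'}_i\}$, giving $F(p)\to\log|M|$. For non-degenerate inputs $\log|M|<F(1)$, and continuity of $F$ in $p$ delivers a finite $p>1$ with $F(p)<F(1)$.

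\textbf{Main obstacle.} The hard part is establishing strictness without appealing solely to the $p\to\infty$ limit, since the covariance above is not unconditionally nonnegative. Expanding $\ell_n'(1)=T_n/a_n=\log a_n-H(w^n)$ with $w^n_i:=z^n_i/a_n$ yields
\begin{equation*}
    F'(1)\;=\;-\Var_{P(1)}(\log a_n)\;+\;\Cov_{P(1)}\!\bigl(\log a_n,\,H(w^n)\bigr),
\end{equation*}
whose sign depends on how the entry-level entropy $H(w^n)$ correlates with the total mass $\log a_n$ across $n$. A dedicated supporting lemma—e.g.\ combining Chebyshev's sum inequality with a comonotonicity argument on the monotone functionals $a_n$ and $T_n$ of the nonnegative vector $\mathbf{z}^n$, or equivalently a Schur-majorization bound on $(P_n(p))_n$ over $(P_n(1))_n$—is required to dominate the second term by the first. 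Most of the technical writing would be concentrated here; once this covariance-sign lemma is in hand, the remaining steps (verifying the derivative conditions, handling symmetric edge cases by swapping $g_p$ for a channel-wise perturbed power that still obeys $g'>0,g''>0$) are routine, and the existence claim follows.
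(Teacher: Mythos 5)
Your route is genuinely different from the paper's. The paper never computes a temperature derivative: it (i) defines an induced scalar map $f$ on inner products via $f(\langle\mathbf{x},\mathbf{y}\rangle):=\langle g(\mathbf{x}),g(\mathbf{y})\rangle$ and argues by Jensen's inequality, under an added assumption that the coordinates are independently distributed, that $f$ inherits positive first and second derivatives from $g$; and (ii) reduces to the case $N=2$, where $\operatorname{PSE}(a,b)$ depends only on the ratio $c=a/b>1$ through $h(c)=\log(c+1)-\frac{c}{c+1}\log c$, shows $h'(c)\le 0$, and notes that a convex increasing $f$ pushes the ratio up, $f(a)/f(b)>a/b$, so the entropy drops. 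Your reduction to $z^n_i=x_iy^n_i$ with the explicit power family is cleaner in one respect: it sidesteps the paper's step (i), which is delicate because $\langle g(\mathbf{x}),g(\mathbf{y})\rangle$ is not a function of $\langle\mathbf{x},\mathbf{y}\rangle$ alone, so $f$ is only defined through the distributional detour the paper takes.

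However, your proposal has a genuine gap and is not yet a proof. The covariance-sign lemma you flag as ``the hard part'' --- showing $F'(p)\le 0$, or at least $F(p)<F(1)$ for some finite $p>1$ --- is essentially the content of the theorem, and you leave it unproved; the paper's Lemma~2 (monotonicity of $h$ plus the ratio inequality) is precisely the ingredient that plays this role, and your sketch supplies no substitute. Your fallback via the $p\to\infty$ limit does not close the gap either: the premise $\log|M|<F(1)$ is asserted for ``non-degenerate inputs'' but is neither defined nor true in general. Concretely, take $\mathbf{x}=(1,1)$, $\mathbf{y}^1=(1,0)$, $\mathbf{y}^2=(1,1-\delta)$ for small $\delta>0$: then $b_1(p)=1$ and $b_2(p)=1+(1-\delta)^p\downarrow 1$, so $F(p)$ increases toward $\log 2>F(1)$ and no power $p>1$ decreases the PSE; and if the $z^n_i$ are $\{0,1\}$-valued then $b_n(p)=a_n$ identically, so $F$ is constant in $p$. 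These examples show both that your limiting argument can point the wrong way and that the theorem itself needs a non-degeneracy hypothesis neither you nor the paper makes explicit. To complete your argument you would need to actually prove the comonotonicity/majorization lemma you allude to, or isolate a regime (e.g., a unique global maximizer of $z^n_i$ over all $(n,i)$ dominating the row sums) in which the $p\to\infty$ comparison is valid and then conclude by continuity of $F$.
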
\vspace{-1ex}
This theorem also provides insights into why commonly used feature maps $\phi$ such as $\operatorname{ReLU}$ or $\operatorname{ELU}+1$ fail to reduce entropy effectively, as they do not satisfy the necessary conditions of having both a positive first and second derivative across their entire domain. 

To select a suitable function $g$, There exists a wide variety of functions $g$ that meet these conditions. However, for the sake of model simplicity and efficiency, we opt for the most straightforward choice: a power function with an exponent greater than 1. Additionally, as different dimensions may contribute \textit{unequally} to the similarity computation, we design learnable exponents to capture the varying importance of each dimension, formalized as follows:
\begin{equation}
\label{learnable scaling factor}
\begin{aligned}
    &\mathbf{p}=1+\alpha\operatorname{sigmoid}(w_1,\dots,w_d), ~g(\mathbf{x};\mathbf{p})=(x_{1}^{p_1},\dots,x_{d}^{p_d})
\end{aligned}
\end{equation}
where $\alpha>0$ is a hyper-parameter scaling factor and $[w_1, \ldots, w_d]$ are learnable parameters. Therefore, the feature map in our linear attention can be expressed as $\phi(\mathbf{x}^{\textcolor{pos}{+}})=g(\operatorname{ReLU}(\mathbf{x});\mathbf{p})$ and $\phi(\mathbf{x}^{\textcolor{neg}{-}})=g(\operatorname{ReLU}(\mathbf{-x});\mathbf{p})$, where $\mathbf{x}$ refers to either $\mathbf{q}$ or $\mathbf{k}$.

\noindent \textbf{Complexity Analysis.} We now analyze the complexity complexity of PolaFormer and demonstrate its linear complexity. Let $d$ denote the number of channels, $d'$ the dimensionality after kernelized, and $k$ the kernel size of convolution ($d' = d$ since $g()$ does just a element-wise mapping).
The computational cost for query, key, value, coefficients $\mathbf{G}^{s}$ and $\mathbf{G}^{o}$ and outputs projections is $5Nd^2$. Performing matrix multiplication for $\mathbf{(Q, K, V)}$ across each head requires $4Ndd'$. The convolution operation contributes $k^2Nd$, while the element-wise multiplication of polarity-aware coefficients $\mathbf{G}^{s}$ and $\mathbf{G}^{o}$ requires $Nd$ computations. Summarizing these components, the total complexity of PolaFormer is given in Equation (\ref{eq:complexity}), confirming its linear complexity \textit{w.r.t.} sequence length $N$.
\begin{equation}
    \label{eq:complexity}
    \Omega = 
    \underbrace{
    5Nd^2
    }_{\texttt{Proj}}
    +
    \underbrace{
    4Ndd'
    }_{\texttt{Pola Attn}}
    +
    \underbrace{
    k^2Nd
    }_{\texttt{Conv}}
    +
    \underbrace{
    Nd
    }_{\texttt{coeff}}
\end{equation}

\vspace{-4ex}
\section{Experiments}\vspace{-1ex}
\begin{table}[h]
\centering\vspace{-6ex}
\caption{Comparison of various linear attention methods relative to the original models (DeiT-T and Swin-T) on the ImageNet-1K dataset, with the best results highlighted in boldface.}
\label{tab:imagenet_result small}
\resizebox{.9\linewidth}{!}{%
\begin{tabular}{l|cccl}
\toprule
\textsc{Method} & \textsc{Reso} & \textsc{Params} & \textsc{FLOPs} & \textsc{Acc(\%)} \\ \midrule
$\operatorname{DeiT}$ \citep{touvron2021training} & $224^2$ & 5.7M & 1.1G & 72.2 \\
$\operatorname{DeiT-Efficient Attn}$ \citep{shen2021efficient}& $224^2$ & 5.7M & 1.1G & 70.2 \\
$\operatorname{DeiT-Hydra Attn}$ \citep{hydraattn} & $224^2$ & 5.7M & 1.1G & 68.3 \\
$\operatorname{DeiT-Enhanced Attn}$ \citep{cai2022efficientvit}& $224^2$ & 5.8M & 1.1G & 72.9 \\
$\operatorname{DeiT-Angular Attn}$ \citep{angularattn} & $224^2$ & 5.7M & 1.1G & 70.8 \\
$\operatorname{DeiT-FLatten Attn}$ \citep{han2023flatten} & $224^2$ & 6.1M & 1.1G & 74.1 \\
$\operatorname{DeiT-Mobi Attn}$ \citep{yaomobile} & $224^2$ & 5.7M & 1.2G & 73.3 \\
\rowcolor{blue!10}
$\operatorname{DeiT-PolaFormer}$ & $224^2$ & 6.1M & 1.2G & \textbf{74.6$_{\textcolor{blue}{+2.4}}$} \\  
\midrule
$\operatorname{Swin}$ \citep{liu2021swin} & $224^2$ & 28M & 4.4G & 81.2 \\
$\operatorname{Swin-Hydra Attn}$ \citep{hydraattn}& $224^2$ & 29M & 4.5G & 80.7 \\
$\operatorname{Swin-Efficient Attn}$ \citep{shen2021efficient}& $224^2$ & 29M & 4.5G & 81.0 \\
$\operatorname{Swin-Linear Angular Attn}$ \citep{angularattn} & $224^2$ & 29M & 4.5G & 79.4 \\
$\operatorname{Swin-Enhanced Attn}$ \citep{cai2022efficientvit} & $224^2$ & 29M & 4.5G & 81.8 \\
$\operatorname{Swin-FLatten Attn}$ \citep{han2023flatten} & $224^2$ & 29M & 4.5G & 82.1 \\
\rowcolor{blue!10}
$\operatorname{Swin-PolaFormer}$ & $224^2$ & 29M & 4.5G & \textbf{82.6$_{\textcolor{blue}{+1.4}}$} \\ 
\bottomrule
\end{tabular}}
\end{table}
In this section, we evaluate our PolaFormer model on three tasks: image classification on ImageNet-1K \citep{deng2009imagenet}, object detection and instance segmentation on COCO \citep{lin2014microsoft}, and semantic segmentation on ADE20K \citep{zhou2019semantic}, comparing its performance with previous efficient vision models. Additionally, we assess PolaFormer on the Long Range Arena (LRA) task \citep{tay2020long} to compare against other linear attention models. We first train PolaFormer from scratch on the image classification task, then fine-tune the pre-trained model on ADE20K dataset for segmentation and COCO dataset for detection. The models were pretrained on 8 NVIDIA A800 GPUs and fine-tuned on 8 NVIDIA RTX A6000 and 8 NVIDIA RTX 3090 GPUs.

\vspace{-2ex}
\subsection{ImageNet-1K Classification}\vspace{-1ex}
The ImageNet-1K \citep{deng2009imagenet} dataset is the widely used dataset for image classification tasks, containing 1,000 categories and over 1.2 million training images. We comprehensively assess our model's performance using Top-1 accuracy, and compare it against recent state-of-the-art efficient Vision Transformer (ViT) models. Specifically, we selected four representative ViT backbones: DeiT \citep{touvron2021training}, PVT \citep{wang2021pyramid}, PVTv2 \citep{wang2022pvt} and Swin-Transformer \citep{liu2021swin}. We replaced their self-attention modules with the our proposed polarity-aware attention module and trained these Pola-variants from scratch on ImageNet-1K.

\noindent\textbf{Results.} The experimental results are presented in Table \ref{tab:imagenet_result small} and Table \ref{tab:imagenet_result large}, consistently showing that our model outperforms the baseline models. For instance, in Table \ref{tab:imagenet_result small}, our DeiT-T-PolaFormer surpasses other DeiT variants from 0.5\% to 6.3\%. In Table \ref{tab:imagenet_result large}, the PVT-T/S-PolaFormer obtain an increase of 3.7\% and 2.1\% comparing with the corresponding baseline with comparable FLOPs. Additionally, our method integrated in Swin and PVTv2 achieves a better balance between performance and efficiency. These results demonstrate that the PolaFormer enhances the expressive capability of the attention mechanism and can be widely applied in various attention-based models.

\begin{figure}[t]
    \centering\vspace{-2ex}
    \includegraphics[width=0.245\linewidth]{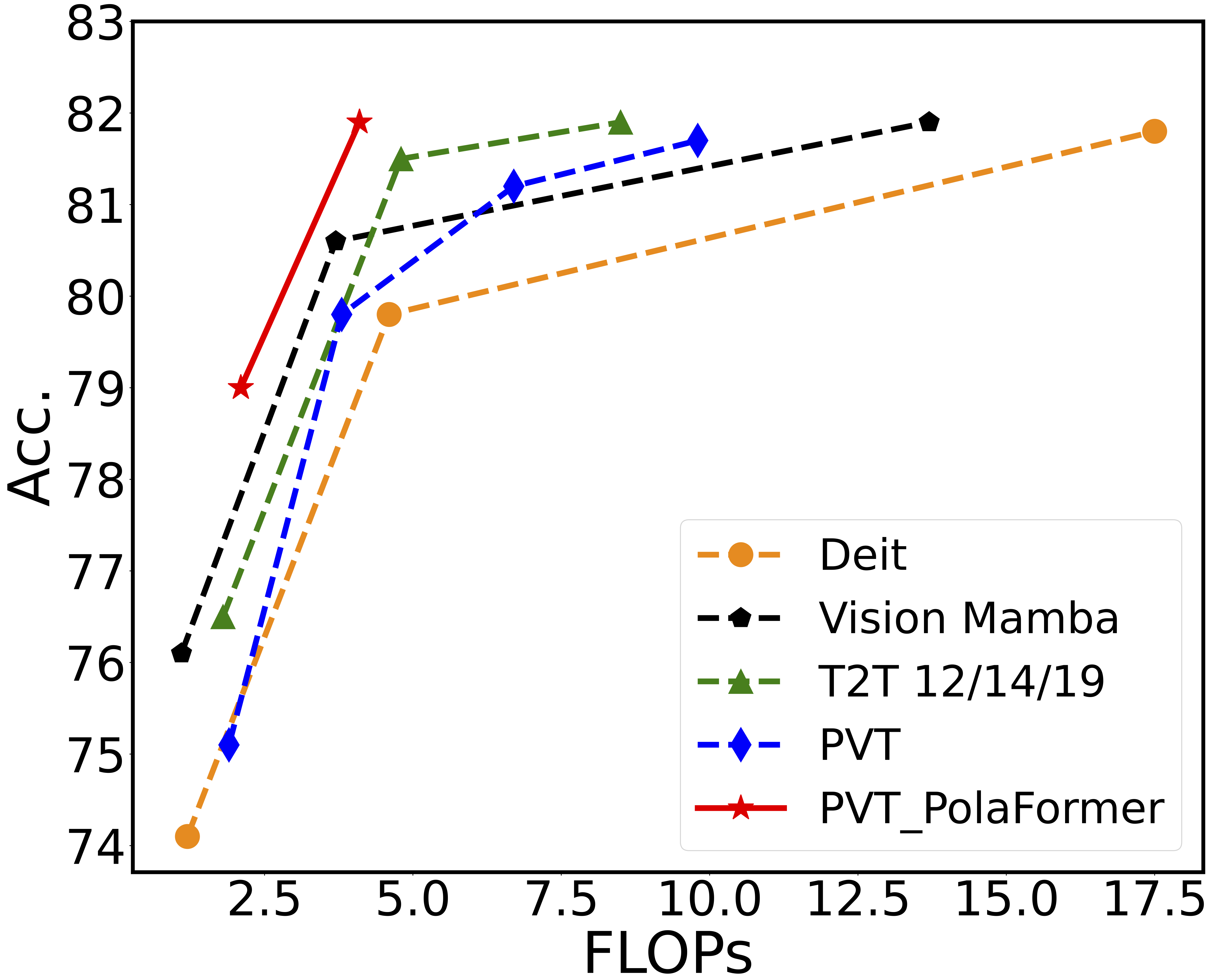}
    \includegraphics[width=0.245\linewidth]{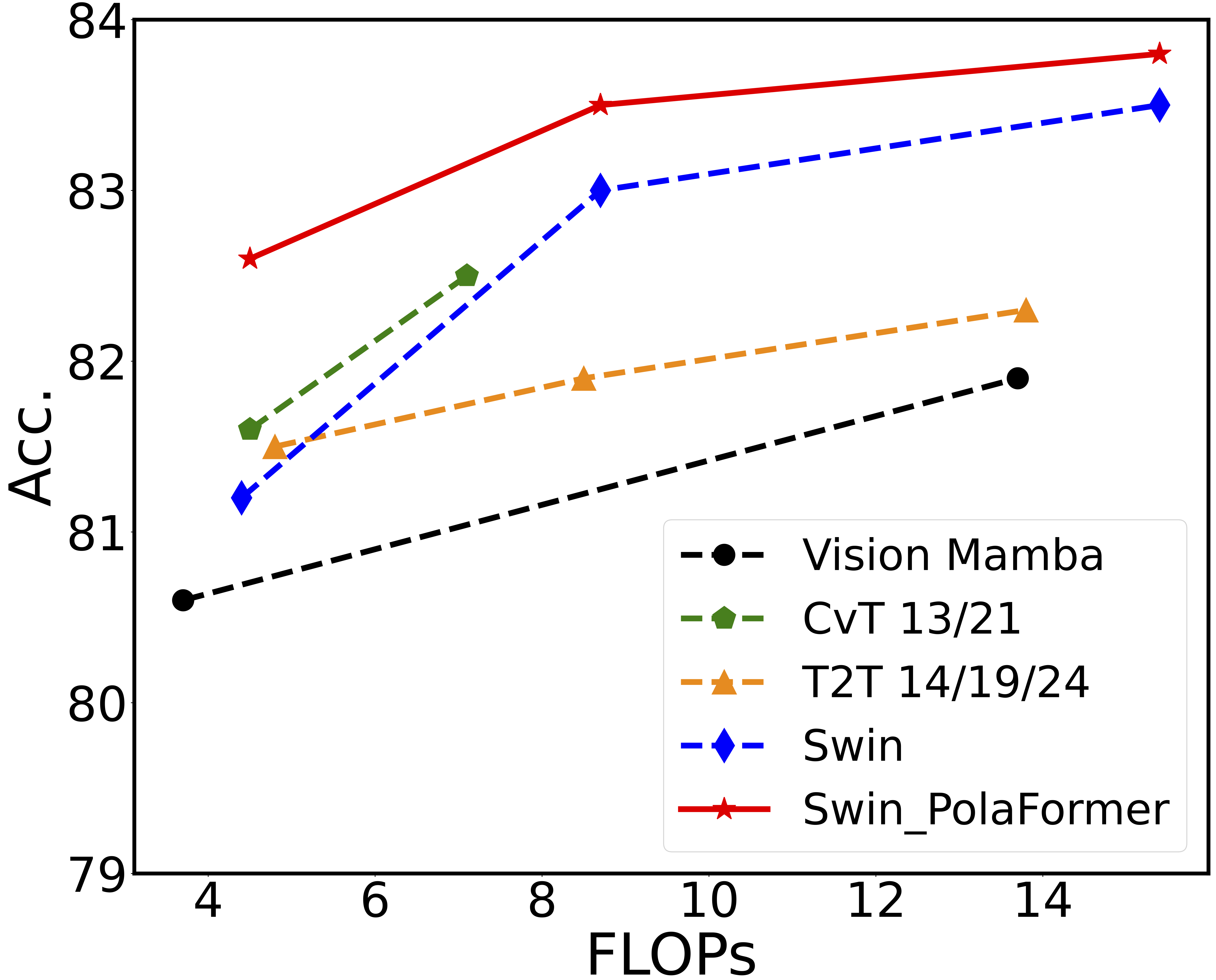}
    \includegraphics[width=0.245\linewidth]{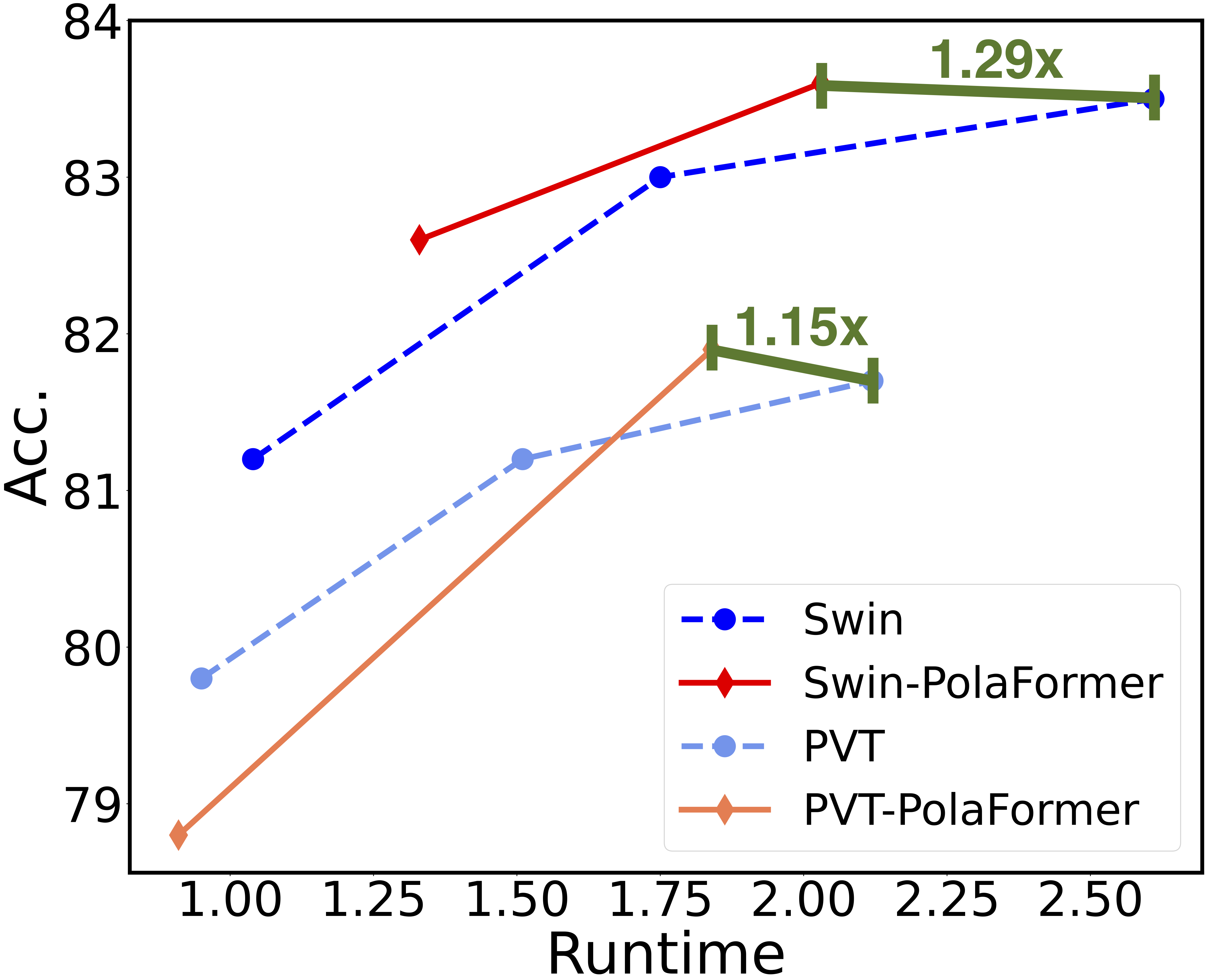}
    \includegraphics[width=0.245\linewidth]{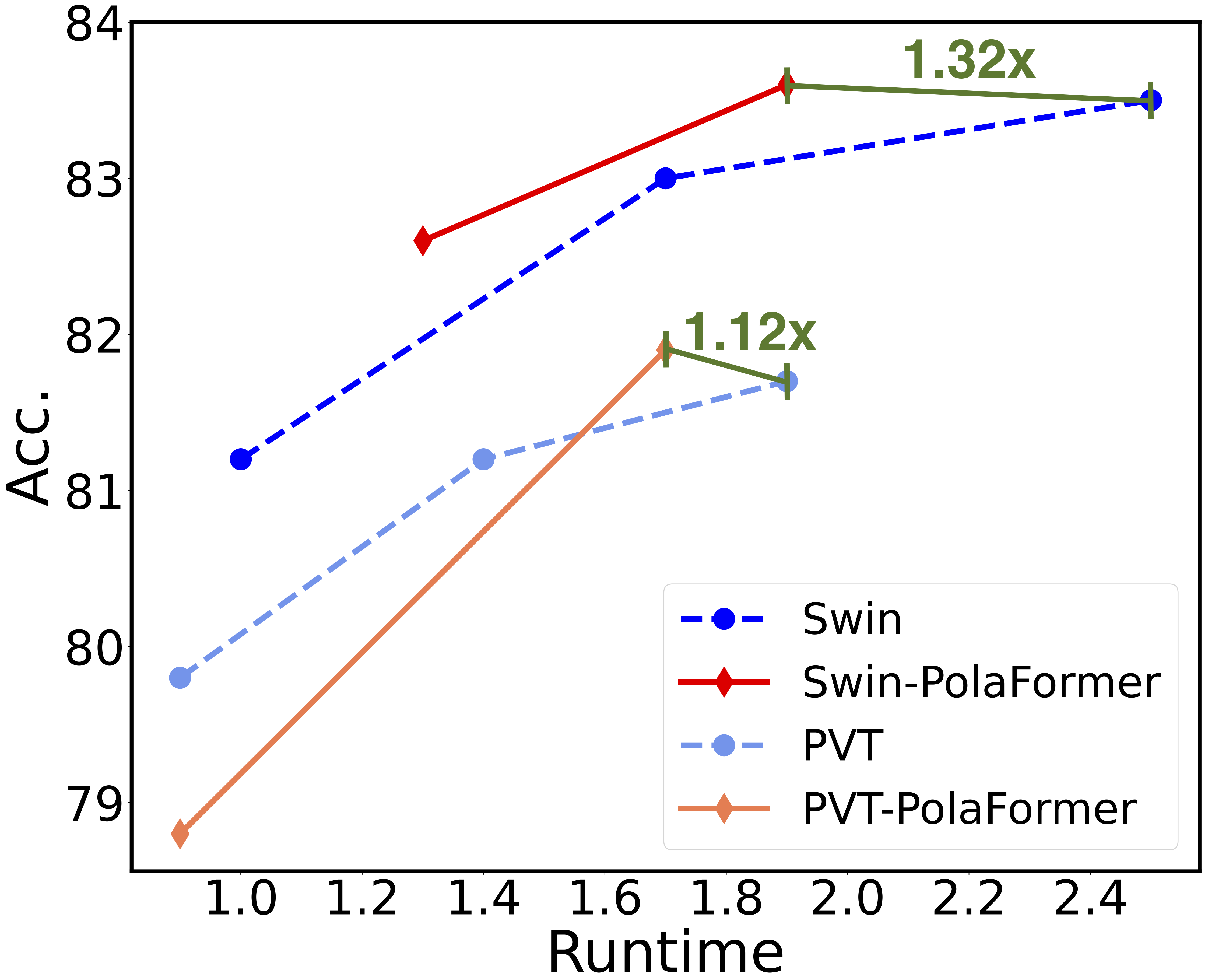}
    \vspace{-4ex}
    \caption{Efficiency analysis with Accuracy vs. FLOPs and Accuracy vs. Runtime curves on the ImageNet-1K dataset.}\vspace{-4ex}
    \label{fig:efficiency}
\end{figure}
\noindent\textbf{Efficiency Analysis.} We visualize the efficiency comparison between the proposed PolaFormer and other linear attention approaches with similar FLOPs in the first two plots of Figure \ref{fig:efficiency}. The results show that our model can achieve comparable performance with significantly less computation. Furthermore, we evaluate the inference speed of PolaFormer. To be specific, we test the PVT-PolaFormer and Swin-PolaFormer on RTX3090 and RTXA6000 platforms, as shown in the third and forth plots of Figure \ref{fig:efficiency}. PVT-PolaFormer achieves $1.15\times$ and $1.12\times$ faster inference speed and Swin-PolaFormer achieves $1.32\times$ and $1.29\times$ faster, both with comparable or higher accuracy. These figures highlight the excellent trade-off between accuracy and latency that our model provide.

\vspace{-1ex}
\subsection{Object Detection and Instance Segmentation}\vspace{-1ex}
We further validate the effectiveness of the proposed approach across various vision tasks, including object detection task on the COCO dataset \citep{lin2014microsoft}, which contains over 118K training images and 5K validation images.  We integrate Pola-Swin and Pola-PVT separately as the backbone into Mask-RCNN (M) \citep{he2017mask}, RetinaNet (R) \citep{ross2017focal} and Cascade Mask R-CNN (C) \citep{cascade} implementations and evaluate their performance based on the ImageNet-1k pretrained weights. As shown in Table \ref{tab:detection and semantic} (left), our model consistently outperforms the original backbones under all settings, achieving notable improvements in all metrics. 
For instance, our PVT-T-PolaFormer tested with both R and M detectors, surpasses the baselines from 2.3\% to 4.6\%. Additionally, our Swin-T-PolaFormer achieves 49.1\% in \textsc{AP}$^{b}_{75}$, showing a 1.4\% improvement compared to the original Swin-T with M detector. 
We additionally evaluate PVT-S-PolaFormer with R and M detectors, and Swin-T-PolaFormer with M and C detectors using 1$\times$ and 3$\times$ schedule. 
Compared to classification tasks, our model delivers more substantial performance gains on detection, which demands fine-grained attention maps for accurate localization of bounding boxes. Our model captures previously omitted interactions involving negative values and better restores attention maps with appropriate scales, effectively distinguishing between similar and dissimilar query-key relationships.
\vspace{-1ex}
\subsection{Semantic Segmentation}\vspace{-1ex}
A similar phenomenon was observed when fine-tuning our pre-trained model for pixel-wise semantic segmentation tasks on the ADE20K dataset. ADE20K \citep{zhou2019semantic} provides a diverse set of annotations for scenes, objects, and object parts, containing 25,000 images of complex scenes with various objects in natural spatial environments. We integrate Pola-Swin and Pola-PVT with ImageNet-1K pre-trained weights into two segmentation models, SemanticFPN \citep{kirillov2019panoptic} and UperNet \citep{xiao2018unified}, using mIoU as the evaluation metric. The results, shown in Table \ref{tab:detection and semantic} (right), demonstrate a performance improvement in mIoU ranging from 1.2\% to 2.6\%. These findings further highlight the versatility of our model, showing that it can be effectively fine-tuned and adapted to a wide range of vision tasks.

\begin{table}[]
\centering\vspace{-6ex}
\caption{Object detection and instance segmentation results on the COCO dataset. The ``Type'' column specifies the detector used: R represents RetinaNet, M for Mask R-CNN, and C for Cascade Mask R-CNN.  For semantic segmentation on the ADE20K dataset (last column), two encoder types are employed: S corresponds to Semantic FPN, and U refers to UperNet.}
\label{tab:detection and semantic}
\resizebox{1\linewidth}{!}{%
\begin{tabular}{l|clllllll|cl}
\toprule
\multirow{2}{*}{\textsc{Method}} & \multicolumn{8}{c|}{\textsc{Detection and Instance Segmentation}} & \multicolumn{2}{c}{\textsc{Semantic Seg}} \\
 & \textsc{Sch.} & \multicolumn{1}{c|}{\textsc{Type}} & \textsc{AP}$^{b}$ & \textsc{AP}$^{b}_{50}$ & \multicolumn{1}{l|}{\textsc{AP}$^{b}_{75}$} & \textsc{AP}$^{m}$ & \textsc{AP}$^{m}_{50}$ & \textsc{AP}$^{m}_{75}$ & \textsc{Type} &\textsc{mIoU(\%)} \\ 
 \midrule[1pt]
\multirow{2}{*}{$\operatorname{PVT-T}$} & $1\times$ & \multicolumn{1}{c|}{R} & 36.7 & - & \multicolumn{1}{l|}{-} & - & - & - & \multirow{2}{*}{S} & \multirow{2}{*}{35.7} \\
 & $1\times$ & \multicolumn{1}{c|}{M} & 36.7 & 59.2 & \multicolumn{1}{l|}{39.3} & 35.1 & 56.7 & 37.3 &  &  \\
 \midrule[0.5pt]
\multirow{2}{*}{$\operatorname{PVT-T-Flatten}$} & $1\times$ & \multicolumn{1}{c|}{R} & - & - & \multicolumn{1}{l|}{-} & - & - & - & \multirow{2}{*}{S} & \multirow{2}{*}{37.2} \\
 & $1\times$ & \multicolumn{1}{c|}{M} & 38.2 & 61.6 & \multicolumn{1}{l|}{41.9} & 37.0 & 57.6 & 39.0 &  &  \\
 \midrule[0.5pt]
 \rowcolor{blue!10} 
 & $1\times$ & \multicolumn{1}{c|}{R} & \textbf{39.5}$_{\textcolor{blue}{+2.8}}$ & \textbf{60.1} & \multicolumn{1}{l|}{\textbf{42.1}} & - & - & - & &  \\
 \rowcolor{blue!10} 
\multirow{-2}{*}{$\operatorname{PVT-T-PolaFormer}$} & $1\times$ & \multicolumn{1}{c|}{M} & \textbf{40.4}$_{\textcolor{blue}{+3.7}}$ & \textbf{62.4}$_{\textcolor{blue}{+3.2}}$ & \multicolumn{1}{l|}{\textbf{43.9}$_{\textcolor{blue}{+4.6}}$} & \textbf{37.4}$_{\textcolor{blue}{+2.3}}$ & \textbf{59.4}$_{\textcolor{blue}{+2.7}}$ & \textbf{40.3}$_{\textcolor{blue}{+3.0}}$ & \multirow{-2}{*}{S}  & \multirow{-2}{*}{\textbf{38.3$_{\textcolor{blue}{+2.6}}$}} \\ 
 \midrule[1pt]
\multirow{2}{*}{$\operatorname{PVT-S}$} & $1\times$ & \multicolumn{1}{c|}{R} & 40.4 & - & \multicolumn{1}{l|}{-} & - & - & - & \multirow{2}{*}{S} & \multirow{2}{*}{39.8} \\
 & $1\times$ & \multicolumn{1}{c|}{M} & 40.4 & 62.9 & \multicolumn{1}{l|}{43.8} & 37.8 & 60.1 & 40.3 &  &  \\
 \midrule[0.5pt]
\rowcolor{blue!10} 
 & $1\times$ & \multicolumn{1}{c|}{R} & \textbf{43.2}$_{\textcolor{blue}{+2.8}}$ & \textbf{64.1} & \multicolumn{1}{l|}{\textbf{46.4}} & - & - & - & & \\
\rowcolor{blue!10} 
\multirow{-2}{*}{$\operatorname{PVT-S-PolaFormer}$} & $1\times$ & \multicolumn{1}{c|}{M} & \textbf{43.9}$_{\textcolor{blue}{+3.5}}$ & \textbf{66.1}$_{\textcolor{blue}{+3.2}}$ & \multicolumn{1}{c|}{\textbf{47.9}$_{\textcolor{blue}{+4.1}}$} & \textbf{40.2}$_{\textcolor{blue}{+2.4}}$ & \textbf{63.1}$_{\textcolor{blue}{+3.0}}$ & \textbf{43.0}$_{\textcolor{blue}{+2.7}}$ & \multirow{-2}{*}{S} & \multirow{-2}{*}{\textbf{41.0}$_{\textcolor{blue}{+1.2}}$}\\ 
 \midrule[1pt]
 & $1\times$ & \multicolumn{1}{c|}{M} & 43.7 & 66.6 & \multicolumn{1}{l|}{47.7} & 39.8 & 63.3 & 42.7 & \multirow{3}{*}{U} & \multirow{3}{*}{44.5} \\
$\operatorname{Swin-T}$ & $3\times$ & \multicolumn{1}{c|}{M} & 46.0 & 68.1 & \multicolumn{1}{l|}{50.3} & 41.6 & 65.1 & 44.9 &  &  \\
 & $3\times$ & \multicolumn{1}{c|}{C} & 50.4 & 69.2 & \multicolumn{1}{l|}{54.7} & 43.7 & 66.6 & 47.3 &  &  \\
 \midrule[0.5pt]
 & $1\times$ & \multicolumn{1}{c|}{M} & 44.2 & 67.3 & \multicolumn{1}{l|}{48.5} & 40.2 & 63.8 & 43.0 & \multirow{3}{*}{U} & \multirow{3}{*}{44.8} \\
$\operatorname{Swin-T-FLatten}$ & $3\times$ & \multicolumn{1}{c|}{M} & 46.5 & 68.5 & \multicolumn{1}{l|}{50.8} & 42.1 & 65.4 & 45.1 &  &  \\
 & $3\times$ & \multicolumn{1}{c|}{C} & 50.8 & 69.6 & \multicolumn{1}{l|}{55.1} & 44.1 & 67.0 & 48.1 &  &  \\
 \midrule[0.5pt]
 \rowcolor{blue!10} 
 & $1\times$ & \multicolumn{1}{c|}{M} & \textbf{44.8}$_{\textcolor{blue}{+1.1}}$ & \textbf{67.6}$_{\textcolor{blue}{+1.0}}$ & \multicolumn{1}{l|}{\textbf{49.1}$_{\textcolor{blue}{+1.4}}$} & \textbf{40.5}$_{\textcolor{blue}{+0.7}}$ & \textbf{64.1}$_{\textcolor{blue}{+0.8}}$ & \textbf{43.5}$_{\textcolor{blue}{+0.7}}$ & \multirow{3}{*}{U} & \multirow{3}{*}{45.8} \\
 \rowcolor{blue!10} 
$\operatorname{Swin-T-PolaFormer}$ & $3\times$ & \multicolumn{1}{c|}{M} & \textbf{47.0}$_{\textcolor{blue}{+1.0}}$ & \textbf{68.9}$_{\textcolor{blue}{+0.8}}$ & \multicolumn{1}{l|}{\textbf{51.5}$_{\textcolor{blue}{+1.2}}$} & \textbf{42.3}$_{\textcolor{blue}{+0.7}}$ & \textbf{66.0}$_{\textcolor{blue}{+0.9}}$ & \textbf{45.8}$_{\textcolor{blue}{+0.9}}$ & U & \textbf{45.8}$_{\textcolor{blue}{+1.3}}$ \\
\rowcolor{blue!10} 
 & $3\times$ & \multicolumn{1}{c|}{C} & \textbf{51.1}$_{\textcolor{blue}{+0.7}}$ & \textbf{70.0}$_{\textcolor{blue}{+0.8}}$ & \multicolumn{1}{l|}{\textbf{55.6}$_{\textcolor{blue}{+0.9}}$} & \textbf{44.4}$_{\textcolor{blue}{+0.7}}$ & \textbf{67.3}$_{\textcolor{blue}{+0.7}}$ & \textbf{48.3}$_{\textcolor{blue}{+1.0}}$ &  & \\
 \bottomrule[2pt]
\end{tabular}}\vspace{-2ex}
\end{table}
\vspace{-1ex}
\subsection{Ablation Study}\label{section:dcn}


\begin{wraptable}{l}{0.4\linewidth}
\centering\vspace{-6ex}
\caption{Ablation study on each module using DeiT-T on ImageNet-1K.}
\label{tab:ablation module}
\resizebox{1\linewidth}{!}{%
\begin{tabular}{cccc|l}
\toprule
\textsc{Polarity} & $\mathbf{G}^s$,$\mathbf{G}^o$ & \textsc{DWC} & \textsc{DCN} & \textsc{Acc. (\%)} \\ \midrule
\checkmark & \checkmark &  & \checkmark & 61.9$_{\textcolor{red}{-12.7}}$ \\
\checkmark &  &  &  & 68.1$_{\textcolor{red}{-6.5}}$ \\
\checkmark &  & \checkmark &  & 72.8$_{\textcolor{red}{-1.8}}$ \\
\checkmark & \checkmark & \checkmark &  & 74.6  \\
\bottomrule
\end{tabular}}\vspace{-3ex}
\end{wraptable}\textbf{Impact of Components.} We evaluate the effectiveness of each component in PolaFormer. As shown in Table \ref{tab:ablation module}, to address the low-rank issue of the attention map, we examine the impact of incorporating deformable convolutions (DCN) and depth-wise convolutions (DWC) in row 1 and row 4, respectively. DWC demonstrates better adaptability, achieving an accuracy of 74.6\%. It is important to note that our model is agnostic to the choice of convolution modules. Furthermore, adopting polarity coefficients $\mathbf{G}^s$ and $\mathbf{G}^o$ yields a 1.8\% improvement in row 3 and 4, indicating that the model effectively learns the complementary relationship between same-signed and opposite-signed values.

\begin{table}[h]
    \centering\vspace{-6ex}
    \caption{Comparisons (\%) between the proposed PolaFormer and other linear attention models on LRA, with the best results are highlighted in boldface.}
    \vspace{1ex}
    \resizebox{.9\linewidth}{!}{%
    \begin{tabular}{ccccccc}
  \toprule
  \textsc{Model}       &  \textsc{Text}  & \textsc{ListOps} & \textsc{Retrieval} & \textsc{Pathfinder} & \textsc{Image} & \textsc{Average}   \\
  \midrule
  $\operatorname{Transformer}$ &  61.55 & 38.71 & 80.93 & 70.39 & 39.14 & 58.14   \\
  \midrule
    $\operatorname{Local Attn}$      & 52.98     & 15.82     & 53.39     & 66.63     & 41.46     & 46.06   \\
    $\operatorname{Linear Trans.}$   & 65.90     & 16.13     & 53.09     & 75.30     & 42.34     & 50.55   \\
    $\operatorname{Reformer}$        & 56.10     & 37.27     & 53.40     & 68.50     & 38.07     & 50.67   \\
    $\operatorname{Performer}$       & 65.40     & 18.01     & 53.82     & \textbf{77.05}     & \textbf{42.77}     & 51.41   \\
    $\operatorname{Synthesizer}$     & 61.68     & 36.99     & 54.67     & 69.45     &41.61      &52.88    \\
    $\operatorname{Longformer}$      & 62.85     & 35.63     & 56.89     & 69.71     & 42.22     & 53.46   \\
    $\operatorname{Informer} $       & 62.13     & 37.05     & 79.35     & 56.44     & 37.86     & 54.57   \\
    $\operatorname{Bigbird} $        & 64.02     & 36.05     & 59.29     & 74.87     & 40.83     & 55.01   \\
    $\operatorname{Linformer}$      & 57.29     & 36.44     & 77.85     & 65.39     & 38.43     & 55.08   \\
    $\operatorname{Kernelized}$      & 60.02     & 38.46     & 82.11     & 69.86     & 32.63     & 56.62   \\
    $\operatorname{Cosformer}$       & 63.54     & 37.2      & 80.28     & 70.00     & 35.84     & 57.37   \\
    $\operatorname{Nystrom}$         & 62.36     & 37.95     & 80.89     & 69.34     & 38.94     & 57.90   \\
    $\operatorname{Skyformer}$       & 64.70     & 38.69     & 82.06     & 70.73     & 40.77     & 59.39   \\
    $\operatorname{Hedgehog}$        & 64.60     & 37.15     & \textbf{82.24}     & 74.16     & 40.15     & 59.66   \\
    \midrule
    \rowcolor{blue!10}
  $\operatorname{PolaFormer}$$_{\alpha=3}$     & \textbf{73.06}     & 37.35 & 80.50 & 70.53 & 42.15 & \textbf{60.72}  \\
  $\operatorname{PolaFormer}$$_{\alpha=5}$     & 72.33     & \textbf{38.76} & 80.37 & 68.98 & 41.91 & 60.47  \\
  $\operatorname{PolaFormer}$$_{\alpha=7}$     & 71.93     & 37.60 & 81.47 & 69.09 & \textbf{42.77} & 60.57  \\
  \bottomrule
    \end{tabular}\label{tab:lra_result_new}}\vspace{-2ex}
\end{table}

\begin{table}[b]
\centering\vspace{-6ex}
\caption{Comparison of classification results on the ImageNet-1K dataset. The default input resolution is $224^2$, except for the last row, which reports results for variants using a resolution of $384^2$.}\vspace{2ex}
\label{tab:imagenet_result large}
\resizebox{.95\linewidth}{!}{%
\begin{tabular}{l|cccl}
\toprule
\textsc{Model} & \textsc{Reso} & \textsc{Params} & \textsc{FLOPs} & \textsc{Acc(\%)} \\ \midrule
$\operatorname{SBCFormer-B}$ \citep{lu2024sbcformer} & $224^2$ & 14M & 1.6G & 80.0 \\ 
$\operatorname{SBCFormer-L}$ \citep{lu2024sbcformer} & $224^2$ & 19M & 2.7G & 81.1 \\
$\operatorname{CAS-ViT-T}$ \citep{zhang2024cas}& $224^2$ & 22M & 3.5G & 82.3 \\
$\operatorname{Vision Mamba-T}$ \citep{zhu2024vision} & $224^2$ & 7M & 1.1G & 76.1 \\
$\operatorname{Vision Mamba-S}$ \citep{zhu2024vision} & $224^2$ & 26M & 3.7G & 80.6 \\
$\operatorname{Vision Mamba-B}$ \citep{zhu2024vision}& $224^2$ & 98M & 13.7G & 81.9 \\

$\operatorname{T2T-14}$ \citep{t2t}& $224^2$ & 21.5M & 4.8G & 81.5 \\
$\operatorname{T2T-19}$ \citep{t2t}& $224^2$ & 39.2M & 8.5G & 81.9 \\
$\operatorname{T2T-24}$ \citep{t2t}& $224^2$ & 64.1M & 13.8G & 82.3 \\

$\operatorname{CvT-13}$ \citep{cvt}& $224^2$ & 20M & 4.5G & 81.6 \\
$\operatorname{CvT-21}$ \citep{cvt}& $224^2$ & 32M & 7.1G & 82.5 \\
$\operatorname{CvT-13}$ \citep{cvt}& $384^2$ & 20M & 16.3G & 83.0 \\
$\operatorname{CvT-21}$ \citep{cvt}& $384^2$ & 32M & 24.9G & 83.3 \\

$\operatorname{HiViT-T}$ \citep{hivit} & $224^2$ & 19M & 4.6G & 82.1 \\
$\operatorname{HiViT-S}$ \citep{hivit} & $224^2$ & 38M & 9.1G & 83.5 \\
$\operatorname{HiViT-B}$ \citep{hivit} & $224^2$ & 66M & 15.9G & 83.8 \\
\midrule

$\operatorname{PVT-T-FLatten}$ \citep{han2023flatten} & $224^2$ & 11M & 1.9G & 77.8 \\
$\operatorname{PVT-S-FLatten}$ \citep{han2023flatten} & $224^2$ & 22M & 4.0G & 81.7 \\
$\operatorname{PVTv2-b0-FLatten}$ \citep{han2023flatten} & $224^2$ & 3.2M & 0.6G & 71.1 \\
$\operatorname{PVTv2-b0-MobiAtt}$ \citep{yaomobile} & $224^2$ & 3.5M & 0.6G & 71.5 \\
$\operatorname{PVTv2-b1-FLatten}$ \citep{han2023flatten} & $224^2$ & 13M & 2.2G & 79.5 \\
$\operatorname{Swin-S-FLatten}$ \citep{han2023flatten}& $224^2$ & 51M & 8.7G & 83.5 \\
$\operatorname{Swin-B-FLatten}$ \citep{han2023flatten}& $224^2$ & 89M & 15.4G & 83.8 \\
\midrule
$\operatorname{PVT-T}$ \citep{wang2021pyramid} & $224^2$ & 13M & 1.9G & 75.1 \\
\rowcolor{blue!10}
$\operatorname{PVT-T-PolaFormer}$ & $224^2$ & 12M & 2.0G & \textbf{78.8$_{\textcolor{blue}{+3.7}}$} \\  
\midrule
$\operatorname{PVT-S}$ \citep{wang2021pyramid} & $224^2$ & 25M & 3.8G & 79.8 \\
\rowcolor{blue!10}
$\operatorname{PVT-S-PolaFormer}$ & $224^2$ & 21M & 4.1G & \textbf{81.9$_{\textcolor{blue}{+2.1}}$} \\ 
\midrule
$\operatorname{PVTv2-b0}$ \citep{wang2022pvt} & $224^2$ & 3.7M & 0.5G & 70.5 \\
\rowcolor{blue!10}
$\operatorname{PVTv2-b0-PolaFormer}$ & $224^2$ & 3.4M & 0.6G & \textbf{72.3 $_{\textcolor{blue}{+1.8}}$} \\  
\midrule
$\operatorname{PVTv2-b1}$ \citep{wang2022pvt} & $224^2$ & 13M & 2.1G & 78.7 \\
\rowcolor{blue!10}
$\operatorname{PVTv2-b1-PolaFormer}$ & $224^2$ & 13M & 2.2G & \textbf{80.2 $_{\textcolor{blue}{+1.5}}$} \\ 
\midrule
$\operatorname{Swin-S}$ \citep{liu2021swin}& $224^2$ & 50M & 8.7G & 83.0 \\
\rowcolor{blue!10} 
$\operatorname{Swin-S-PolaFormer}$ & $224^2$ & 50M & 8.7G & \textbf{83.6$_{\textcolor{blue}{+0.6}}$} \\ 
\midrule
$\operatorname{Swin-B}$ \citep{liu2021swin}& $224^2$ & 88M & 15.4G & 83.5 \\
\rowcolor{blue!10} 
$\operatorname{Swin-B-PolaFormer}$ & $224^2$ & 88M & 15.4G & \textbf{83.8$_{\textcolor{blue}{+0.3}}$} \\ 
\midrule
$\operatorname{Swin-S}$ \citep{liu2021swin}& $384^2$ & 50M & 25.2G & 84.3 \\
$\operatorname{Swin-B}$ \citep{liu2021swin}& $384^2$ & 88M & 47.0G & 84.5 \\
\rowcolor{blue!10} 
$\operatorname{Swin-S-PolaFormer}$ & $384^2$ & 50M & 25.5G & \textbf{84.7$_{\textcolor{blue}{+0.4}}$} \\
\bottomrule
\end{tabular}}\vspace{-4ex}
\end{table}
\noindent\textbf{Comparison with Other Linear Attention.} To compare with other linear attention models, we evaluate our PolaFormer on the Long Range Arena (LRA) \citep{tay2020long} task, which is composed with five tasks: ListOps \citep{nangia2018listops}, Text Classification on IMDb review dataset \citep{maas2011learning}, Document Retrieval on AAN dataset \citep{radev2013acl}, Pathfinder \citep{NEURIPS2018_ec895663}, and Image Classification on CIFAR-10 \citep{krizhevsky2009learning}. 
The sequence length in both TEXT and RETRIEVAL tasks is 4k, in LISTOPS is 2k and in PATHFINDER and IMAGE is 1k.
Following the setup of Skyformer \citep{chen2021skyformer}, we adopt a comparable number of parameters and train the entire model end-to-end with the task-specific losses. Results for different scaling factors $\alpha$ of PolaFormer are shown in the bottom rows of Table \ref{tab:lra_result_new}. PolaFormer$_{\alpha=3}$ obtains the highest accuracy 73.06\% in Text Classification task, PolaFormer$_{\alpha=5}$ has achieved the state-of-the-art results in ListOps task for 38.76\% accuracy and the PolaFormer$_{\alpha=7}$ gains the best performance in Image Classification task with an accuracy of 42.77\%. It is worth mentioning that PolaFormer$_{\alpha=3}$ achieves the highest overall scores on LRA
benchmark, with all variants outperforming other linear attention models. Our model achieves better scores in linear complexity relying on its extraction ability and higher rank, showing great potential in both NLP and CV tasks.

\textbf{Impact of Learnable Scaling.} In Equation (\ref{learnable scaling factor}), we introduce the scaling factor $\alpha$ in our learnable power function, and analyze the effect of exponent $\mathbf{p}$ on the model performance. The value of $\alpha$ primarily depends on the model size and context length. As the model size increases, a larger $\alpha$ is required to effectively select the most relevant tokens from long sequences, thereby reducing information entropy. We evaluate the model with $\alpha=3,5,7$ on the LRA task, shown at the bottom of Table \ref{tab:lra_result_new}. Although PolaFormer$_{\alpha=3}$ achieves the best performance, in practice, for classification tasks, the results are relatively insensitive to variations in $\alpha$, with a difference of no more than 2\%.

\section{Conclusion}\vspace{-1ex}
In this work, we presented PolaFormer, a novel efficient transformer with linear complexity. Our PolaFormer is built on two properties of the original softmax attention: (i) making each element of the attention weight non-negative and (ii) making attention weight spikier. To fulfill these properties, we computed the similarity in a polarity-aware form to avoid neglecting negatives; theoretically, we proposed a family of element-wise functions to lower the entropy and employ a learnable power function for simplicity and rescaling. Besides, we used convolution to alleviate the problem of degenerate solutions caused by the low-rank property of $\mathbf{SM}$ and introduced polarity-aware coefficient matrices to learn the complementary relationship between same-signed and opposite-signed values. We validated the effectiveness of the proposed  PolaFormer in a series of vision tasks and additionally benchmarked on the LRA testbed to fairly compare with mainstream linear attention models. The experimental results demonstrated that our model has good compatibility with most attention-based models and measures up to a better balance between performance and efficiency. 
\clearpage

\subsection*{Acknowledgments}
This research is partially supported by National Natural Science Foundation of China (Grant No. 62372132), Shenzhen Science and Technology Program (Grant No. RCYX20221008092852077) and Australian Research Council (DE240100105, DP240101814, DP230101196). We would like to express our gratitude to Huawei funding and valuable discussions with Dr. Wei Zhou.

\bibliography{iclr2025_conference}
\bibliographystyle{iclr2025_conference}
\clearpage
\appendix
\section{Appendix}
This Appendix provides proof and supporting lemma for Theorem \ref{theorem}, followed by implementation details for various vision tasks. The source code is available in the supplementary material for reference.
\begin{itemize}
    \item \textbf{Proof. \ref{sec:proof}:} The mathematical proof and supporting lemmas of Theorem \ref{theorem}
    \item \textbf{Implementation Details. \ref{impldetail}:} Training settings for all experiments
    \item \textbf{Long Sequence Efficiency}
    \item \textbf{Comparison of the results of models with different $\mathbf{G}$ initializations}
    \item \textbf{Visualization of Attention Probability Distribution's Entropy}
    \item \textbf{Visualization of Attention Maps}
\end{itemize}

\subsection{Proof of Theorem \ref{theorem}}\label{sec:proof}

\noindent\textbf{Theorem.}
    Let $\mathbf{x},\mathbf{y}^n \in \mathbb{R}^d$ for $n=1,\dots N$, and dimensions are independently distributed. Given that $g:[0,+\infty )\mapsto [0,+\infty )$ is a differentiable function satisfying the condition $g'(x)>0$ and $g''(x)>0$ for all $x>0$. Then, there exists such a function $g$ such that the $\operatorname{PSE}$ of the transformed sequence is strictly less than that of the original sequence. Specifically, we have:
    \begin{equation}
        \operatorname{PSE}(\langle g(\mathbf{x}),g(\mathbf{y^1})\rangle,\dots,\langle g(\mathbf{x}),g(\mathbf{y}^N)\rangle)
        <
        \operatorname{PSE}(\langle\mathbf{x},\mathbf{y^1}\rangle,\dots,\langle\mathbf{x},\mathbf{y}^N\rangle).
    \end{equation}

\noindent\textbf{Proof.} We establish two lemmas to facilitate the proof of the main theorem.


\begin{lemma}
\label{lemma1}
   Let $f$ be a function induced by $g:[0,+\infty )\mapsto [0,+\infty )$ with the  conditions of $g'(x)>0$ and $g''(x)>0$, for all $x>0$, defined as:
    \begin{equation}
    f(\langle \mathbf{x},\mathbf{y}\rangle ):=\langle g(\mathbf{x}),g(\mathbf{y})\rangle 
    \end{equation}
    where $\mathbf{x,y} \in {\mathbb{R}^d}^{+}, g(\mathbf{x})=(g(x_1),\dots,g(x_d)).$ Then $f(x)>0$, $f'(x)>0$ and $f''(x)>0$, for all $x\ge0.$
\end{lemma}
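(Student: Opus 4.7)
The plan is to establish the three properties $f > 0$, $f' > 0$, $f'' > 0$ separately, with positivity essentially immediate from the hypotheses and the derivative conditions reduced to term-by-term differentiation of the sum $\sum_i g(x_i)g(y_i)$.

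First, I would dispose of positivity. Since $g : [0,+\infty) \to [0,+\infty)$ with $g' > 0$ on $(0,+\infty)$, the map $g$ is strictly increasing and non-negative, so every coordinate satisfies $g(x_i), g(y_i) \geq 0$. Thus $\langle g(\mathbf{x}), g(\mathbf{y}) \rangle \geq 0$, with strict positivity whenever $\mathbf{x}, \mathbf{y} \in \mathbb{R}^d_+$ have some strictly positive component, giving $f > 0$.

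For the derivative conditions, I would parameterize one input along a ray in the positive orthant so that $f$ becomes a genuine scalar function of $t$. Fix $\mathbf{x}, \mathbf{y}_0 \in \mathbb{R}^d_+$ and let $\mathbf{y}(s) = s\mathbf{y}_0$ for $s > 0$. Then $t(s) := \langle \mathbf{x}, \mathbf{y}(s)\rangle = sc$ with $c := \langle \mathbf{x}, \mathbf{y}_0\rangle > 0$ is linear in $s$, while $F(s) := \langle g(\mathbf{x}), g(\mathbf{y}(s))\rangle = \sum_i g(x_i)\, g(sy_{0,i})$. Term-by-term differentiation yields
\begin{equation*}
F'(s) = \sum_i g(x_i)\, g'(sy_{0,i})\, y_{0,i} > 0, \qquad F''(s) = \sum_i g(x_i)\, g''(sy_{0,i})\, y_{0,i}^2 > 0,
\end{equation*}
since $g, g', g'' > 0$ on $(0,+\infty)$ and each $x_i, y_{0,i} > 0$. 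Writing $f(t) = F(t/c)$ and applying the chain rule then gives $f'(t) = F'(t/c)/c > 0$ and $f''(t) = F''(t/c)/c^2 > 0$, as required.

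The main obstacle here is interpretational rather than computational: the correspondence $(\mathbf{x}, \mathbf{y}) \mapsto \langle g(\mathbf{x}), g(\mathbf{y})\rangle$ is not truly a function of $\langle \mathbf{x}, \mathbf{y}\rangle$ alone, since distinct input pairs with the same inner product can produce different images after applying $g$ componentwise. The ray-parameterization sidesteps this by realizing both $t$ and $F$ as functions of a common scalar $s$ and establishing the sign-of-derivative inequalities along every positive ray, which is precisely the form needed when Theorem~\ref{theorem} later compares row-wise unnormalized attention scores. I would flag this interpretational caveat explicitly at the start of the proof, and then the rest is a direct computation driven entirely by the positivity of $g$ and its two derivatives.
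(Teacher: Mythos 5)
Your computations are correct, and your route is genuinely different from the paper's. The paper's proof treats the coordinates of $\mathbf{q}$ and $\mathbf{k}$ as random variables that are independent across dimensions, and argues via Jensen's inequality that $\mathbb{E}[f(\langle\mathbf{q},\mathbf{k}\rangle)]\le f(\mathbb{E}[\langle\mathbf{q},\mathbf{k}\rangle])$, from which it asserts that $f$ has a positive second derivative (it even describes $f$ as ``concave'' while claiming $f''>0$, and the Jensen step is applied in the direction appropriate to a concave $g$ even though $g''>0$ makes $g$ convex, so the paper's argument is at best loosely stated). Your approach is deterministic and more elementary: you fix a ray $\mathbf{y}(s)=s\mathbf{y}_0$, observe that $t=\langle\mathbf{x},\mathbf{y}(s)\rangle$ is linear in $s$, and differentiate $F(s)=\sum_i g(x_i)g(sy_{0,i})$ term by term, which cleanly yields $f'>0$ and $f''>0$ from the positivity of $g$, $g'$, $g''$ without any distributional assumption. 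What the paper's probabilistic framing buys is a (nominal) statement about $f$ evaluated at arbitrary inner products under the independence hypothesis; what your framing buys is rigor and transparency, at the cost of only defining $f$ ray by ray.

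You are also right to flag that $f$ is not well defined as a function of the scalar $\langle\mathbf{x},\mathbf{y}\rangle$ alone; this is a defect of the lemma's statement that the paper glosses over entirely. One residual caveat: your claim that the ray-wise form is ``precisely what is needed'' for Theorem~\ref{theorem} is not quite accurate, since the theorem compares $\langle\mathbf{x},\mathbf{y}^1\rangle,\dots,\langle\mathbf{x},\mathbf{y}^N\rangle$ for keys $\mathbf{y}^n$ that are generally not collinear, so the ray-wise functions $f$ differ across $n$ and Lemma~\ref{lemma2} cannot be invoked with a single common $f$. This gap is inherited from the lemma statement rather than introduced by your proof, but it is worth stating explicitly rather than suggesting the ray parameterization resolves it.
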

\begin{proof}
        Consider the element-wise function $g$ for pairs of $(\mathbf{x,y})$ with dimension $d$:
    \begin{equation}
    \begin{aligned}
        g(\mathbf{x})&=(g(x_1),\dots,g(x_d))\\
        g(\mathbf{y})&=(g(y_1),\dots,g(y_d))
    \end{aligned}
    \end{equation}
    Then, the inner-product between $g(\mathbf{x})$ and $g(\mathbf{y})$ is given by,
    \begin{equation}
        \langle g(\mathbf{x}),g(\mathbf{y})\rangle =\sum^{d}_{i=1}g(x_i)g(y_i).
    \end{equation}
    Because of the independence across dimensions, we apply Jensen's inequality, leveraging $g'(x)>0$ and $g''(x)>0$,  yielding:
    \begin{equation}
    \begin{aligned}
        \mathbb{E}[f(\langle \mathbf{q},\mathbf{k}\rangle )]
        & = \mathbb{E}[\langle g(\mathbf{q}),g(\mathbf{k})\rangle ] 
         =\mathbb{E}[\sum_{i=1}^{d}g(q_i)g(k_i)] \\
        & = \sum_{i=1}^{d}\mathbb{E}[g(q_i)g(k_i)] 
         = \sum_{i=1}^{d}\mathbb{E}[g(q_i)]\mathbb{E}[g(k_i)]\\ 
        & \le \sum_{i=1}^{d}g(\mathbb{E}[q_i])g(\mathbb{E}[k_i])\quad\texttt{(Jensen's Inequality)} \\
        & = \langle g(\mathbb{E}[\mathbf{q}]),g(\mathbb{E}[\mathbf{k}])\rangle  = f(\langle \mathbb{E}[\mathbf{q}],\mathbb{E}[\mathbf{k}]\rangle ) \\
        & = f(\mathbb{E}[\langle \mathbf{q},\mathbf{k}\rangle ])
    \end{aligned}
    \end{equation}
    where $\mathbb{E}[\mathbf{q}] = (\mathbb{E}[q_1],\dots,\mathbb{E}[q_d])$ denotes a vector. Consequently, we have the following results, \textit{i.e.,}
    \begin{equation}
        \mathbb{E}[f(\langle \mathbf{q},\mathbf{k}\rangle )] \le f(\mathbb{E}[\langle \mathbf{q},\mathbf{k}\rangle]),
    \end{equation}
    indicating that $f$ is concave function having a positive second derivative. Also, according to the definition of $\mathbf{x}$ and $\mathbf{y}$, $f$ is obviously mapping from $[0,+\infty)$ to $[0,+\infty)$ with a positive first derivative.\\
    
\end{proof}
\begin{lemma}
\label{lemma2}
    Given two positive values $(a,b)$, and function $f:[0,+\infty )\mapsto [0,+\infty )$ with the conditions of $f'(x)>0$ and $f''(x)>0$, we have $\operatorname{PSE}(f(a),f(b))\le \operatorname{PSE}(a,b)$.
\end{lemma}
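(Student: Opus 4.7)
The plan is to reduce Lemma~\ref{lemma2} to a statement about the binary entropy function, and then to deduce that statement from convexity of $f$. First, I would observe that for a two-element positive sequence $(a,b)$ with $s=a+b$, the PSE coincides with the binary entropy $H(p) := -p\log p - (1-p)\log(1-p)$ evaluated at $p = a/(a+b)$. Since $H$ is symmetric about $p=1/2$, strictly concave, and monotonically decreasing on $[1/2,1]$, the inequality $\operatorname{PSE}(f(a),f(b)) \le \operatorname{PSE}(a,b)$ is equivalent to the assertion that the transformed probability $p' := f(a)/(f(a)+f(b))$ is at least as far from $1/2$ as $p$ is.

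Without loss of generality I would assume $a \ge b > 0$, so that $p \ge 1/2$. The target inequality $p' \ge p$ then simplifies algebraically to
\begin{equation*}
\frac{f(a)}{a} \;\ge\; \frac{f(b)}{b},
\end{equation*}
so the task becomes showing that $\psi(x) := f(x)/x$ is non-decreasing on $(0,+\infty)$. Differentiating gives $\psi'(x) = \bigl(x f'(x) - f(x)\bigr)/x^2$, so it suffices to prove that the auxiliary function $\rho(x) := x f'(x) - f(x)$ is non-negative on $(0,+\infty)$. A short calculation shows $\rho'(x) = x f''(x)$, which is positive under the hypotheses of the lemma; thus $\rho$ is strictly increasing on $(0,+\infty)$, and in particular $\rho(x) \ge \rho(0^+) = -f(0)$. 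Combining with $f(0)=0$ (which is inherited from the setting of Lemma~\ref{lemma1}, where $g$ maps $0$ to $0$ and the induced kernel vanishes when either argument is zero), I get $\rho(x) \ge 0$, hence $\psi$ is non-decreasing, hence $p' \ge p$, which closes the argument.

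The main obstacle I anticipate is the boundary value $f(0)$: the bare inequalities $f'>0$ and $f''>0$ only force $\rho(x) \ge -f(0)$, and without $f(0) \le 0$ the ratio $f(x)/x$ need not be monotone (e.g.\ $f(x)=e^x$ is a convex counterexample where PSE can in fact increase). The cleanest way to handle this in the write-up is to note that in the regime of interest $f$ is induced via Lemma~\ref{lemma1} from an element-wise $g:[0,+\infty)\to[0,+\infty)$ applied to non-negative feature coordinates, so the induced $f$ satisfies $f(0)=0$; the learnable power maps used in the model, $g(x)=x^p$ with $p>1$, are explicit instances. A secondary, purely cosmetic issue is the degenerate case $a=b$, which makes $p=1/2$ and the inequality trivial, and the case where $a$ or $b$ vanishes, which must be handled as a limit since $\psi$ is defined only on $(0,+\infty)$; both can be disposed of in a single remark before the main chain of inequalities.
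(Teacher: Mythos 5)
Your proposal is correct and follows essentially the same route as the paper's proof: both arguments reduce the two-point PSE comparison to showing that the normalized pair moves further from uniform under $f$, and both then invoke monotonicity of the binary entropy (the paper parametrizes by the ratio $c=a/b$ and shows $h(x)=\log(x+1)-\tfrac{x}{x+1}\log x$ is decreasing for $x>1$, which is the same fact as your monotonicity of $H$ in $|p-\tfrac12|$ under the substitution $c=p/(1-p)$). The substantive difference is that the paper simply asserts the key step --- that $d:=f(a)/f(b)$ exceeds $c:=a/b$ --- with the phrase ``it is easy to prove,'' whereas you actually prove it via the auxiliary function $\rho(x)=xf'(x)-f(x)$, $\rho'(x)=xf''(x)>0$. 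More importantly, you correctly observe that this step is \emph{not} a consequence of $f'>0$ and $f''>0$ alone: it additionally requires $f(0)\le 0$, and your $f(x)=e^x$ counterexample (e.g.\ $a=0.2$, $b=0.1$, where $f(a)/f(b)=e^{0.1}<2=a/b$ and the PSE strictly increases) shows the lemma as literally stated is false. Your fix --- noting that the $f$ induced in Lemma~\ref{lemma1} from an element-wise $g:[0,\infty)\to[0,\infty)$ with $g(0)=0$ (as for the power maps actually used) satisfies $f(0)=0$ --- is exactly the hypothesis the paper should have recorded. So your write-up is not merely equivalent to the paper's; it repairs a genuine gap in it.
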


\begin{proof}
        Consider the case $N=2$ (extendable to $N>2$).
        Without loss of generality, we assume $a>b,c:=\frac{a}{b}$, then $c>1$, 
        and $\operatorname{PSE(a,b)}$ can be calculated as
    \begin{equation}
    \label{eq:entropy}
    \begin{aligned}
        H_1 &= -(\frac{a}{a+b}\log(\frac{a}{a+b}) + \frac{b}{a+b}\log(\frac{b}{a+b})) \\
          &= -(\frac{c}{c+1}\log(\frac{c}{c+1}) + \frac{1}{c+1}\log(\frac{1}{c+1})) \\
          &= \log(c+1) - \frac{c}{c+1}\log(c) 
    \end{aligned}
    \end{equation}
    
    Then, we apply the kernel function $f$ on $(a,b)$, and it is mapped to $(f(a),f(b))$. Then, we define $d$ by $d:=\frac{f(a)}{f(b)}$, and it is easy to prove that $d>c>1$. Followed by Eq. (\ref{eq:entropy}), we can compute $\operatorname{PSE}(f(a),f(b))$ as:
    \begin{equation}
        H_2 = \log(d+1) - \frac{d}{d+1}\log(d) 
    \end{equation}
    Through defining
        $h(x)=\log(x+1)-\frac{x}{x+1}\log(x),x>1$, 
    we have
    \begin{equation}
    \begin{aligned}
        h'(x)&=-\frac{\log(x)}{(x+1)^2}\\
        h'(x)&\le 0,\quad x>1
    \end{aligned}
    \end{equation}
    indicating that $H_1=h(c)>H_2=h(c)$ for all $x>1$, \textit{i.e.,} $H_2<H_1$. Therefore, all functions that satisfy the conditions have the effect of entropy decrease.
\end{proof}
Now come back to the theorem. Firstly, we define $f$ induced by $g$ that 
\begin{equation}
    f(\langle\mathbf{x},\mathbf{y}\rangle)=\langle g(\mathbf{x}),g(\mathbf{y})\rangle
\end{equation}
From Lemma \ref{lemma1}, we know that $f$ is a function with positive first and second derivative. Then by using Lemma \ref{lemma2}, we have,
\begin{equation}
    \operatorname{PSE}(f(\langle\mathbf{x},\mathbf{y}^1\rangle),f(\langle\mathbf{x},\mathbf{y}^2\rangle))<\operatorname{PSE}(\langle\mathbf{x},\mathbf{y}^1\rangle,\langle\mathbf{x},\mathbf{y}^2\rangle)
\end{equation}
Therefore, the scaling effect can be achieved by the element-wise computation based on a function $g$ with positive first and second derivative. This allows for the removal of the softmax function, enabling linear complexity and lower entropy in the attention mechanism.\qed

\subsection{Implementation Details}
\label{impldetail}
\textbf{Classification.}
In this task, we use the AdamW optimizer \citep{loshchilov2017decoupled} to train all of our models for 400 epochs, including 20 epochs for linear warm-up. The basic learning rate is set to $1e-3$ for 1024 batch size. Additionally, we use a weight decay of $5e-2$. The training framework is developed on the top of the official Swin Transformer implementation made by Microsoft.

\textbf{Object Detection.}
In this task, we utilize pretrained PVT models and Swin models on as the backbone and connect them to various detectors. Specifically, for the PVT model, we select from RetinaNet and Mask R-CNN as detectors, with the schedule set to $1\times$. For the Swin model, we choose the detector from Mask R-CNN and Cascade Mask R-CNN as detectors, where models using Mask R-CNN are experimented with under both $1\times$ and $3\times$ schedule settings, while models using Cascade Mask R-CNN case are trained under the $3\times$ schedule. All experiments follow the \texttt{mmcv-detection} \citep{mmcv} project. The training epoch is set to 12 per schedule and we use the AdamW optimizer with a learning rate of $1e-4$ and a weight decay of $1e-4$.

\textbf{Semantic Segmentation.}
we employ pretrained PVT models and Swin models on two representative segmentation models, SemanticFPN and UperNet. The task is conducted based \texttt{mmcv-segmentation} \citep{mmcv} project. The training interation is set to 40000 for PVT-SFPN models, 160000 for Swin-UperNet models by using AdamW optimizer with a learning rate of $2e-4$ and a weight decay of $1e-3$.

\textbf{Long Range Arena.}
We evaluate the PolaFormer based on the official implementation of Skyformer \citep{chen2021skyformer}. For Listops and Text Classification, we set batch size to 32 with $1e-4$ learning rate. For Pathfinder,  we set batch size to 128 with $5e-4$ learning rate. For Image Classification,  we set batch size to 256 with $1e-4$ learning rate. For Retrieval sub-task, we set batch size to 16 with $2e-4$ learning rate. All models are trained from scratch using the AdamW optimizer.

\subsection{Long Sequence Efficiency}
To evaluate the scalability of our model in such settings, we performed experiments on the Long-Range Arena (LRA) benchmark. These results demonstrate PolaFormer’s efficiency and scalability for both high-resolution vision tasks and long-sequence NLP applications.
\begin{table}[!ht]
    \centering
    \caption{Throughput and Peak Memory of various models. A denotes the accuracy, T denotes the throughput of each model and M denotes the peak memory cost.}
\vspace{1ex}
    \resizebox{.9\linewidth}{!}{
    \begin{tabular}{c|c|ccccccc}
    \toprule
        ~ & ~ & Softmax & Kernelized & Nystrom & Linformer & Informer & Skyformer & Pola(ours) \\ \midrule
        ~& A & 39.14 & 32.63 & 38.94 & 38.43 & 37.86 & 40.77 & 42.15 \\ 
        Img  & T & 736.36 & 862.32 & 1251.28 & 1613.19 & 85.85 & 923.04 & 1340.89 \\ 
        (1k) & M & 9645 & 13013 & 5941 & 3471 & 5357 & 8091 & 4505 \\ \midrule
        ~ & A & 70.39 & 69.86 & 69.34 & 65.39 & 56.44 & 70.73 & 70.53 \\ 
        Path & T & 691.67 & 811.59 & 1125.08 & 1057.03 & 299.94 & 748.98 & 1065.63 \\
        (1k) & M & 4831 & 6515 & 2980 & 1745 & 2687 & 4055 & 2286 \\ \midrule
        ~ & A & 38.71 & 38.46 & 37.95 & 36.44 & 37.05 & 38.69 & 37.35 \\ 
        List & T & 402.06 & 496.48 & 834.85 & 528.52 & 305.53 & 627.14 & 949.80 \\ 
        (2k) & M & 4473 & 6084 & 1186 & 881 & 2737 & 1712 & 1151 \\ \midrule
        ~ & A & 61.55 & 60.02 & 62.36 & 57.29 & 62.13 & 64.7 & 73.06 \\ 
        Text & T & 252.06 & 327.27 & 1330.68 & 970.90 & 521.16 & 949.80 & 876.74 \\ 
        (4k) & M & 17122 & 11720 & 2043 & 1742 & 5736 & 3082 & 1155 \\  \midrule
        ~ & A & 80.93 & 82.11 & 80.89 & 77.85 & 79.35 & 82.06 & 80.5 \\ 
        Retri & T & 116.30 & 144.83 & 496.48 & 424.18 & 142.94& 348.60 & 344.93 \\ 
        (4k) & M & 8947 & 10699 & 2011 & 1649 & 3399 & 2987 & 1139 \\ \midrule
        ~ & A & 58.14 & 56.62$_{\textcolor{red}{-1.52}}$ & 57.90$_{\textcolor{red}{-0.24}}$ & 55.08$_{\textcolor{red}{-3.06}}$ & 54.57$_{\textcolor{red}{-3.57}}$ & 59.39$_{\textcolor{blue}{+1.25}}$ & 60.72$_{\textcolor{blue}{+2.58}}$ \\ 
        Avg & T & 439.69 & 528.50$_{\textcolor{blue}{\times 1.20}}$ & 1007.68$_{\textcolor{blue}{\times 2.29}}$ & 918.77$_{\textcolor{blue}{\times 2.09}}$ & 271.08$_{\textcolor{red}{\times 0.62}}$ & 719.51$_{\textcolor{blue}{\times 1.80}}$ & 915.60$_{\textcolor{blue}{\times 2.08}}$ \\ 
        ~ & M & 9003.6 & 9606.2$_{\textcolor{red}{\times 1.07}}$ & 2832.2$_{\textcolor{blue}{\times 0.31}}$ & 1897.6$_{\textcolor{blue}{\times 0.21}}$ & 3983.2$_{\textcolor{blue}{\times 0.44}}$ & 3985.4$_{\textcolor{blue}{\times 0.44}}$ & 2047.2$_{\textcolor{blue}{\times 0.22}}$ \\ \bottomrule
    \end{tabular}
    }\vspace{-2ex}
\end{table}

\subsection{Comparison of the results with different initializations of coefficients matrices}
To assess the impact of $\mathbf{G}$ initialization on downstream tasks, we conducted additional experiments using five distinct initialization methods. These experiments were performed on a text classification (TEXT) task in Long Range Arena (LRA) with a sequence length of 4k, maintaining the same experimental setup as described in Table \ref{tab:lra_result_new}.
The initialization strategies tested included Kaiming uniform, zero initialization, normal distribution ($\mathcal{N}(0, 1)$), uniform distribution ($\mathcal{U}(0, 1)$), and constant ones. The results are summarized in the table below:
\begin{table}[!ht]
    \centering
    \begin{tabular}{c|cccccc}
    \toprule
        Init Comparison & Kaiming Uniform & Zeros & Normal(0,1) & Uniform(0,1) & Ones & ~ \\ \midrule
        Acc & 73.06 & 72.17 & 74.30 & 74.40 & 70.70 & ~ \\ \bottomrule
    \end{tabular}
\end{table}

\subsection{Visualization of Attention Probability Distribution's Entropy}

We compute the entropy of standard self-attention, linear attention \citep{katharopoulos2020transformers} and PolaFormer. Additionally, we visualize the distribution of one row of the Attention Score matrix, as shown in Figure \ref{fig:entropy}. It is clear that our model has a lower entropy than linear attention.
\begin{figure}
    \centering
    \includegraphics[width=1\linewidth]{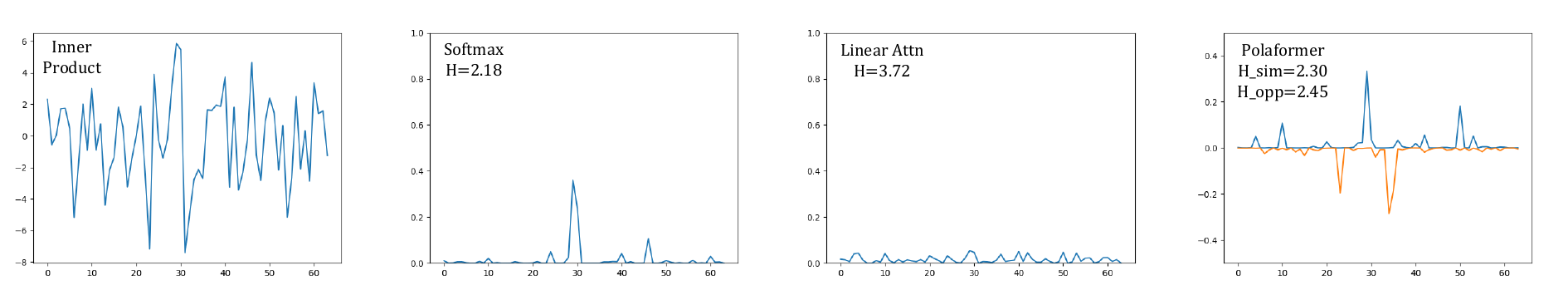}
    \caption{Visualization of different attention probability distributions}
    \label{fig:entropy}
\end{figure}
\subsection{Visualization of Attention Maps}
To further show the characteristics of accurate similarity calculation and low information entropy of our model. We visualize more examples shown in Figure \ref{fig:example}. Thanks to the superiority of our designed kernel function, PolaFormer can calculate the similarity more accurately and focus on more relevant places.
\begin{figure}
    \centering
    \includegraphics[width=0.8\linewidth]{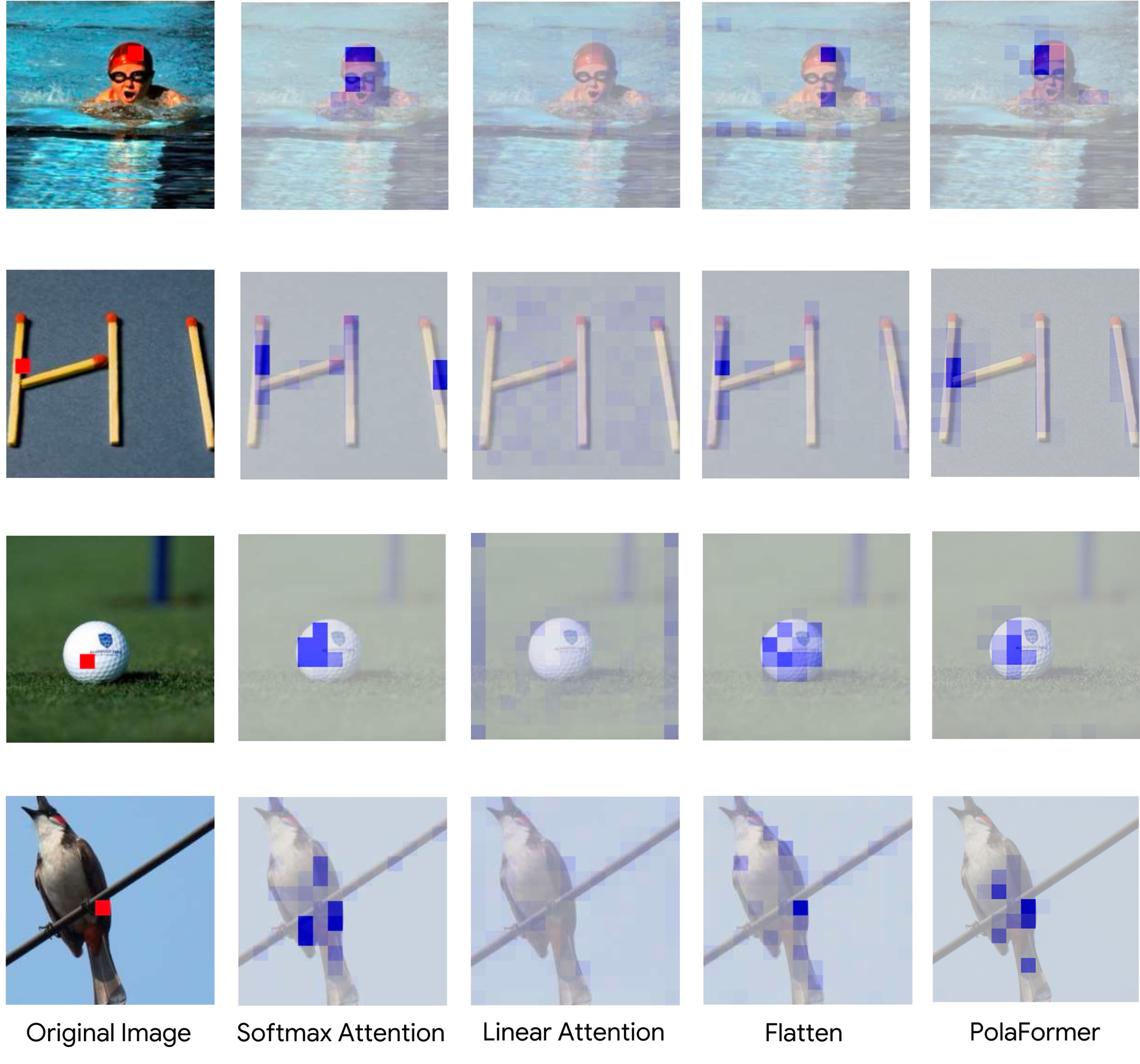}
    \caption{Visualization of the attention maps.}
    \label{fig:example}
\end{figure}
\end{document}